\documentclass[11pt]{article}
\usepackage[utf8]{inputenc}

\title{Concentration Bounds for Discrete Distribution Estimation in KL Divergence}

\usepackage{xspace}
\usepackage{amsfonts,amsmath,amssymb,amsthm, pbox}

\usepackage{multirow}
\usepackage{dsfont} %

\usepackage{algorithmic,algorithm}

\usepackage[shortlabels]{enumitem}
\setitemize{noitemsep,topsep=0pt,parsep=0pt,partopsep=0pt}
\setenumerate{noitemsep,topsep=0pt,parsep=0pt,partopsep=0pt}

\makeatletter
\@ifundefined{theorem}{%
  \theoremstyle{definition}
  
  \theoremstyle{plain}
  \newtheorem{theorem}{Theorem}
  \newtheorem{corollary}{Corollary}
  \newtheorem{lemma}{Lemma}
  
  \newtheorem{fact}{Fact}
  
  \theoremstyle{remark}
  \newtheorem{remark}{Remark}

}{}
\makeatother

\newcommand{\ignore}[1]{}

\newcommand{\EE}{\mathbb{E}}

\newcommand{\RR}{\mathbb{R}}

\newcommand{\expectation}[1]{\EE\left[#1\right]}

\newcommand{\Var}{{\rm Var}}

\newcommand{\eg}{\textit{e.g.,}\xspace}
\newcommand{\ie}{\textit{i.e.,}\xspace}  %
\newcommand{\iid}{\textit{i.i.d.}} %

\def \Paren#1{{\left({#1}\right)}}

\def \frc#1#2{{\frac{#1}{#2}}}

\newcommand{\eqdef}{{:=}}

\def\ignore#1{}

\newcommand{\bi}{\begin{itemize}}
\newcommand{\ei}{\end{itemize}}

\def\orpro{\mathop{\mathchoice
   {\vee\kern-.49em\raise.7ex\hbox{$\cdot$}\kern.4em}
   {\vee\kern-.45em\raise.63ex\hbox{$\cdot$}\kern.2em}
   {\vee\kern-.4em\raise.3ex\hbox{$\cdot$}\kern.1em}
   {\vee\kern-.35em\raise2.2ex\hbox{$\cdot$}\kern.1em}}\limits}

\def\andpro{\mathop{\mathchoice
 {\wedge\kern-.46em\lower.69ex\hbox{$\cdot$}\kern.3em}
 {\wedge\kern-.46em\lower.58ex\hbox{$\cdot$}\kern.25em}
 {\wedge\kern-.38em\lower.5ex\hbox{$\cdot$}\kern.1em}
 {\wedge\kern-.3em\lower.5ex\hbox{$\cdot$}\kern.1em}}\limits}

\def\simge{\mathrel{%
   \rlap{\raise 0.511ex \hbox{$>$}}{\lower 0.511ex \hbox{$\sim$}}}}

\def\simle{\mathrel{
   \rlap{\raise 0.511ex \hbox{$<$}}{\lower 0.511ex \hbox{$\sim$}}}}

\DeclareMathOperator*{\KL}{KL}
\DeclareMathOperator*{\var}{Var}

\newcommand{\ab}{k}

\newcommand{\ns}{n}

\newcommand{\eps}{\varepsilon}

\usepackage[dvipsnames]{xcolor}
\usepackage[capitalise]{cleveref}

\newcommand{\Bin}{\text{Bin}}
\newcommand{\emp}{\text{emp}}

\newcommand{\indic}[1]{\mathbf{1}_{#1}}
\newcommand{\nzero}{n_0}

\renewcommand{\eqdef}{\triangleq}

\DeclareMathOperator{\poisson}{Poisson}
\usepackage[numbers]{natbib}
\usepackage{graphicx}
\usepackage{fullpage}
\usepackage{algorithm, algorithmic}

\usepackage{authblk}
\author[1]{Cl\'ement L. Canonne}
\author[2]{Ziteng Sun}
\author[2]{Ananda Theertha Suresh\thanks{theertha@google.com}}
\affil[1]{University of Sydney}
\affil[2]{Google Research, New York}

\newcommand{\conf}[1]{}
\newcommand{\arxiv}[1]{#1}

\begin{document}

\maketitle

\begin{abstract}
We study the problem of discrete distribution estimation in KL divergence and provide  concentration bounds for the Laplace estimator. We show that the deviation from mean scales as $\sqrt{k}/n$ when $n \ge k$, improving upon the best prior result of $k/n$. We also establish a matching lower bound that shows that our bounds are tight up to  polylogarithmic factors.
\end{abstract}

\section{Introduction}

Discrete distribution estimation, \ie density estimation over discrete domains, is a fundamental problem in Statistics, with a rich history (see, \eg~\cite{DevroyeL01,Diakonikolas16} for an overview and further references). In this work, we address a simple yet surprisingly ill-understood aspect of this question: what is sample complexity of estimating an arbitrary discrete distribution in Kullback–Leibler ($\KL$) divergence \emph{with vanishing probability of error?}

To describe the problem further, a few definitions are in order. Let $\Delta_\ab$ denote the set of probability distributions over $[\ab] = \{1,2,3 \ldots, \ab\}$, \ie
\[
\Delta_\ab \eqdef \left\{p : \sum^\ab_{i=1} p_i = 1, p_i \geq 0 \, \forall i \in [\ab] \right\},
\]
where we identify a discrete probability distribution with its probability mass function (pmf), seen as a vector. 
Given $\ns$ independent samples $X^\ns \eqdef X_1, X_2, \ldots, X_\ns$ from an unknown distribution $p \in \Delta_k$, the goal of distribution estimation is to estimate the underlying distribution $p$. The accuracy of the estimate is typically measured using a loss function $L\colon \Delta_k \times \Delta_k \to \RR^+ \cup \{0\}$. Popular loss functions include $\ell_1$ (or, equivalently, total variation (TV) distance), $\ell_2$, and $\ell_\infty$ distances, as well as Kullback--Leibler (KL). Specifically, for $r \geq 1$, the $\ell_r$ distance is given by  
\[
\ell_r(p, q) = \left( \sum^k_{i=1} |p_i - q_i|^r \right)^{1/r},
\]
and the KL divergence is defined as
\[
\KL(p \,\|\,  q) = \sum^k_{i=1} p_i \log \frac{p_i}{q_i}.
\]
where $\log$ denotes the natural logarithm, and with the convention $0\log 0 = 0$. Note that KL divergence, while always non-negative, is unbounded and asymmetric, and in general $\KL(p \,\|\,  q) \neq \KL(q \,\|\,  p)$; moreover, it is related to $\ell_1$ distance \emph{via} (among others) Pinsker's inequality, which states that 
$
\ell^2_1(p,q) \leq 2\KL(p \,\|\,  q)
$ for every $p,q\in\Delta_\ab$. For distribution estimation questions under KL loss, one typically uses $L(p,\hat{p}) \eqdef \KL(p \,\|\,  \hat{p})$, that is, the estimate is the second argument. 

\paragraph{Commonly used estimators\arxiv{.}} 
Perhaps the most used estimator is the maximum likelihood estimator or \emph{empirical estimator}, which estimates the distribution as $\hat{p}^\emp$ defined by  
\begin{equation}
\hat{p}^\emp_i = \frac{N_i}{n}, \qquad i\in[\ab],
\end{equation}
where $N_i$ is the number of occurrences of symbol $i$ in $X^n$. While the empirical estimator has good convergence guarantees in $\ell_r$ distances, $\KL(p \,\|\,  \hat{p}^\emp)$ can become unbounded, since $\hat{p}^\emp_i$ can be zero for some $i$ with $p_i>0$. To avoid this, \emph{add-constant estimators} are used to obtain estimators in KL divergence, where the \emph{add-$t$ estimator} is given by
\begin{equation}
\hat{p}^t_i = \frac{N_i+ t}{n + k \cdot t}, \qquad i\in[\ab].
\end{equation}
One can easily check that this defines a \textit{bona fide} probability distribution. Intuitively, the parameter $t$ allows one to smooth the estimator to interpolate between the empirical one and a uniform prior, thus mitigating the previous issue. 
The add-constant estimator with $t=1$ is commonly referred to as the \emph{Laplace estimator}, while the add-constant estimator with $t=1/2$ is also known as the \emph{Krichevsky--Trofimov (KT) estimator} \cite{krichevsky1981performance}.

\paragraph{Minimax rates and concentration bounds\arxiv{.}} Typically these estimators are studied to provide \emph{minimax rates}, which bound the expected loss
\[
\min_{\hat{p}}\max_{p \in \Delta_k} \EE_{X^n \sim p^n} \left[ L(p, \hat{p}(X^n)) \right]\,,
\]
where the minimum is taken over all possible $\ns$-sample estimators $\hat{p}\colon [\ab]^\ns \to \Delta_\ab$, and the maximum over all possible discrete distributions on $[\ab]$. 
The precise asymptotic minimax rates (including the leading constant) for a fixed $k$ and as $n$ increases are known for $\ell_1$ distance \cite{kamath2015learning}, $\ell_2$ distance \cite[p.~349]{lehmann2006theory},  scaled $\ell_2$ losses and chi-squared type distances \cite{olkin1979admissible, rutkowska1977minimax, kamath2015learning}, as well as KL divergence \cite{braess2004bernstein}. Distribution-specific rates are also known for $\ell_1$ distance \cite{han2015minimax,Canonne:NoteLearningDistributions,CohenKW20}.
In terms of $\KL(p \,\|\,  \hat{p}(X^\ns))$, \cite{mourtada2022improper} showed that the rate of  convergence for the Laplace estimator is given by
\begin{equation}
    \label{eq:minimax:laplace:kl}
\max_{p \in \Delta_k} \EE_{X^n \sim p} \left[ \KL(p \,\|\,  \hat{p}^1) \right] \leq \log \frac{n+k}{n+1} \leq \frac{k-1}{n},
\end{equation}
and where the upper bound is asymptotically tight as $n\to\infty$ (for fixed $k$).

As opposed to the expectation bounds above, we wish to provide \emph{concentration bounds} of the following form. With probability $1-\delta$,
\begin{equation}
L(p, \hat{p}) \leq \EE[L(p, \hat{p})] + t_\delta.
\end{equation}
Such a bound for $\ell_1$ distance follows straightforwardly from an application of McDiarmid's inequality; an extension for arbitrary discrete domains (unbounded $k$) can be found in \cite{cohen2020learning}. For general $\ell_r$ losses, the set of results is surveyed in~\cite{Canonne:NoteLearningDistributions}.

For KL divergence, \cite{BhattacharyyaGPV21} showed that for the Laplace estimator $\hat{p}^1$, with probability at least $1-\delta$,
\begin{equation}
    \label{eq:bound:bgpv}
\KL(p \,\|\,  \hat{p}^1) \leq \EE[\KL(p \,\|\,  \hat{p}^1)] + t_{\delta}
\end{equation}
where $t_{\delta} \asymp \frac{k}{n}\cdot \log n\cdot \log\frac{k}{\delta}$ (where $\asymp$ denotes equality up to constants).
However, note that in view of~\eqref{eq:minimax:laplace:kl} above, this shows that the $t_{\delta}$ term dominates the expectation term, raising the question of whether one can strengthen this, and decouple the bound into an expectation term and a (mild) concentration term $t_\delta$. Recently, \cite[Lemma 17]{han2021optimal} showed that for the Laplace estimator $\hat{p}^1$, with probability at least $1-\delta$,
\begin{equation}
    \label{eq:bound:bgpv2}
\KL(p \,\|\,  \hat{p}^1) \leq \frac{2k}{n} + \tilde{t}_{\delta},
\end{equation}
where $\tilde{t}_{\delta} \asymp \frac{\sqrt{k}}{n}\cdot \bigl(\log \frac{1}{\delta}\bigr)^3$. While this improves on earlier work, the bound does not have the true expectation $\EE[\KL(p \,\|\,  \hat{p}^1)]$, but an upper bound $ \frac{2k}{n}$, which is at least a factor $2$ off (see \eqref{eq:minimax:laplace:kl}). 

Our main result is to show that one can get tight bounds similar to \eqref{eq:bound:bgpv2}, but with $\EE[\KL(p \,\|\,  \hat{p}^1)]$ as the leading term (and a slight improvement in the exponent of the logarithm). Namely, we show that the following bound on the minimax rate of the Laplace estimator holds:
\begin{theorem}[Main theorem, Informal (\cref{thm:KL})]
\label{thm:KL:informal}
With probability at least $1-\delta$, the Laplace add-1 estimator on $\ns\geq \ab$ samples satisfies
\[
\KL(p \,\|\,  \hat{p}^1 ) \leq \EE[\KL(p \,\|\,  \hat{p}^1 ) ] + O\!\left(\frac{\sqrt{k}}{n}\log^{5/2} \frac{k}{\delta} \right).
\]
\end{theorem}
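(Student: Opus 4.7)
Since $-H(p) + \log(n+k)$ is deterministic, the problem reduces to concentrating $Z \eqdef \sum_i p_i \log(N_i + 1)$ around $\EE[Z]$. Setting $\lambda_i \eqdef n p_i$ and working on the high-probability event $\cE$ that $|N_i - \lambda_i| \leq C_1\bigl(\sqrt{\lambda_i \log(k/\delta)} + \log(k/\delta)\bigr)$ for every $i$ (which holds with probability $\geq 1 - \delta/4$ by Chernoff together with a union bound), I would second-order Taylor expand
\[
\log(N_i+1) \;=\; \log(\lambda_i+1) \;+\; \frac{N_i-\lambda_i}{\lambda_i+1} \;-\; \frac{(N_i-\lambda_i)^2}{2(\lambda_i+1)^2} \;+\; R_i,
\]
so that after weighting by $p_i$ and centering we obtain a decomposition $Z - \EE Z = L + (Q - \EE Q) + (\tilde R - \EE \tilde R)$ into linear, quadratic and remainder pieces.

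\medskip
\noindent\textbf{Linear term.} The crucial observation is that $\sum_i (N_i - \lambda_i) = 0$ under the multinomial, so setting $a_i \eqdef p_i/(\lambda_i+1)$ we may re-center $L = -\sum_i (a_i - 1/n)(N_i - \lambda_i)$, where $a_i - 1/n = -1/\bigl(n(\lambda_i+1)\bigr)$ has magnitude at most $1/n$. Viewing the multinomial as a sum of $n$ i.i.d.\ categorical draws, $L$ becomes a sum of $n$ bounded i.i.d.\ random variables whose per-sample variance is at most $(1/n^2)\sum_i p_i/(\lambda_i+1)^2 \leq k/n^3$, using the elementary uniform bound $p_i/(np_i+1)^2 \leq 1/n$ valid for all $i$. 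Bernstein's inequality then yields $|L| = O\!\bigl(\sqrt{k\log(k/\delta)}/n + \log(k/\delta)/n\bigr)$ with probability $\geq 1 - \delta/4$, well inside the claimed rate.

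\medskip
\noindent\textbf{Quadratic term and remainder.} For heavy symbols ($\lambda_i \gtrsim \log(k/\delta)$), each Poissonized summand of $Q$ is sub-exponential with variance $O(1/n^2)$, summing to $O(k/n^2)$, and on $\cE$ is bounded by $O(\log(k/\delta)/n)$; Bernstein for independent sub-exponentials then gives $|Q_{\text{heavy}} - \EE Q_{\text{heavy}}| = O\!\bigl(\sqrt{k}/n \cdot \sqrt{\log(k/\delta)} + \log^2(k/\delta)/n\bigr)$. Transferring from Poissonization back to the multinomial costs only a $\sqrt{n}$ factor in the failure probability, absorbed as an additive $\log n$ under the polylog. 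The cubic Taylor remainder for heavy symbols is controlled pointwise on $\cE$ by $|R_i| = O(|N_i-\lambda_i|^3/(\lambda_i+1)^3)$ and, after summing, contributes only $O(\text{polylog}(k/\delta)/n)$.

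\medskip
\noindent\textbf{Main obstacle.} The delicate part is the contribution of \emph{light} symbols ($\lambda_i \lesssim \log(k/\delta)$), where the Taylor expansion is invalid: a naive Bernstein applied to $\sum_{\text{light}} p_i\bigl(\log(N_i+1) - \EE\log(N_i+1)\bigr)$---using the per-symbol variance $O(np_i^3 \log^2(k/\delta))$ that follows from $\log(N_i+1) = 0$ unless $N_i \geq 1$ (probability $O(\lambda_i)$)---only recovers a $1/\sqrt n$ scale bound, which is too loose whenever $k \ll n$. The plan is to exploit the multinomial constraint exactly as in the linear analysis: the sum $\sum_{\text{light}} N_i = n - \sum_{\text{heavy}} N_i$ is itself strongly concentrated (via the heavy-symbol analysis applied to the complement), and re-centering the light contribution by this constrained total cancels the leading fluctuation, shrinking the effective variance by the missing factor of $k/n$ and recovering the $O(\sqrt{k}/n)$ scale. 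Combining the three contributions via a union bound and collecting polylog factors from the conditioning event $\cE$, the Poissonization transfer, and Bernstein then gives the claimed $O\!\bigl(\sqrt{k}/n \cdot \log^{5/2}(k/\delta)\bigr)$ bound.
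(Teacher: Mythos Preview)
Your approach is genuinely different from the paper's. The paper does \emph{not} Taylor-expand or split into linear/quadratic/remainder pieces. Instead it adds to $\KL(p\,\|\,\hat{p}^1)$ the term $\sum_i (N_i+1)/n$, which is a deterministic constant under the multinomial but, after Poissonizing, turns each summand into
\[
g_i(N'_i) \;=\; p_i\log\frac{np_i}{N'_i+1} + \frac{N'_i+1}{n} - p_i \,,
\]
shown to satisfy $0 \le g_i \le (np_i-N'_i-1)^2/\bigl(n(N'_i+1)\bigr) \le 36\log^2(4k/\delta)/n$ with high probability via a self-normalized Poisson tail bound, \emph{uniformly in $i$}---no heavy/light dichotomy is needed. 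A single McDiarmid over the $k$ independent Poisson counts then gives the $\sqrt{k}$ rate. Your re-centering of the linear term via $\sum_i(N_i-\lambda_i)=0$ is morally the same cancellation (the $1/n$ you subtract from each $a_i$ is exactly the fluctuation of the paper's added $(N_i+1)/n$ term), but the paper packages everything into one truncated bounded-difference inequality rather than three separate Bernstein arguments.

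There are two concrete gaps in your plan. First, your ``main obstacle'' is not one: under Poissonization (which you already invoke for the heavy quadratic piece), the light-symbol sum has total variance $\sum_{\text{light}} p_i^2\,\var\!\bigl(\log(N'_i+1)\bigr) = O\!\bigl(\sum_{\text{light}} p_i^2\bigr) \le O\!\bigl(k\log^2(k/\delta)/n^2\bigr)$, simply because there are at most $k$ terms each with $p_i \le \log(k/\delta)/n$. This already yields the $\sqrt{k}/n$ scale; your re-centering scheme is both unnecessary and unlikely to work as stated, since for small $\lambda_i$ the function $\log(N_i+1)$ is not dominated by a linear fluctuation that a constraint on $\sum_{\text{light}} N_i$ could cancel. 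Second, and more substantively, the Poisson-to-multinomial transfer you invoke only handles the $\sqrt{n}$ blowup in \emph{failure probability}; it says nothing about the gap $\bigl|\EE_{\text{Poi}}[\log(N'_i+1)] - \EE_{\text{Bin}}[\log(N_i+1)]\bigr|$ between the two \emph{expectations}, which you need in order to center around the correct mean. The paper devotes a separate lemma (its Lemma~1, proved via a nontrivial binomial--Poisson coupling) to bounding this by $O(1/n + k/n^{3/2})$; your plan would need an analogous argument.
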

\noindent We refer the reader to~\cref{thm:KL} for the detailed statement, including leading constants and a bound holding for all regimes of $\ns$. We emphasize that, in contrast to the previous bound given in~\eqref{eq:bound:bgpv}, the additional $t_\delta$ term is here \emph{sublinear in $\ab$}, and in particular negligible in front of the expectation term $ \EE[\KL(p \,\|\,  \hat{p}^1 ) ]$ for most values of $\delta$. Viewed differently, our result improves on that of~\cite{BhattacharyyaGPV21} for all $\delta \geq \exp(-\tilde{O}(k^{1/3}))$.

Given the above result, it is natural to wonder if our bound can be significantly improved, and in particular if the $\sqrt{\ab}$ dependence in the $t_\delta$ term can be improved upon. In~\cref{sec:variance:lb}, we answer this negatively, by establishing a lower bound on the variance of $\KL(p \,\|\,  \hat{p}^1 )$ (\cref{th:variance:lb}). By a standard argument, this lower bound implies that the $\sqrt{\ab}$ dependence in the concentration bound is essentially optimal (\cref{cor:sqrtk:lb}). We also show in \cref{fig:samplevsestimate} that the dependence of $\sqrt{k}/n$ on the variance of the Laplace estimator can be observed in simulated experiments. We generate 10240 \iid~samples from uniform distributions of different support sizes and compute the standard deviation of $ \EE[\KL(p \,\|\,  \hat{p}^1 ) ]$ over 1000 repetitions and compare it with the estimated standard deviation of $\sqrt{\frac{k}{2}}\frac1{\ns}$.\footnote{See Appendix~\ref{app:figure} for a heuristic explanation of the constant.} As we can see, the estimate approximates the true sample standard deviation well.

\begin{figure}[t]
\centering
\includegraphics[width = \conf{0.45}\arxiv{0.75}\textwidth]{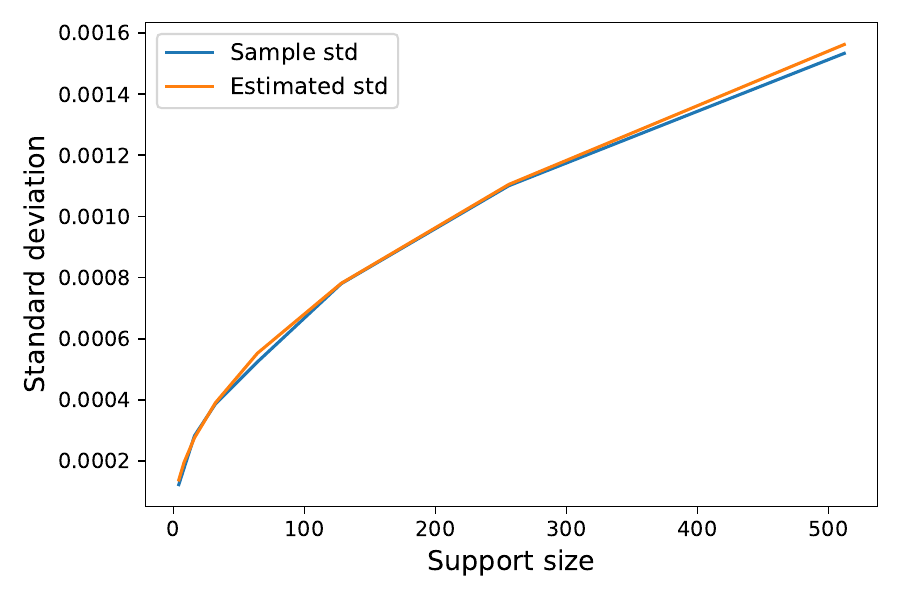}
\caption{Sample standard deviation vs. estimated standard deviation of the Laplace estimator where the sample mean is computed over $10240$ \iid~samples generated from uniform distributions with different support sizes. Each standard deviation is computed over 1000 independent experiments.}
\label{fig:samplevsestimate}
\end{figure}

\begin{remark}
We emphasize that our result applies to the loss $L(p,\hat{p}) = \EE[\KL(p \,\|\,  \hat{p})]$, and not to the (different) goal of minimizing $\EE[\KL(\hat{p} \,\|\,  p)]$. This latter goal is much easier, in the sense that the empirical estimator not only provides non-trivial bounds for it, but is known to achieve the optimal (up to constant factors) rate, as well as to provide (similarly optimal) high-probability bounds~\cite{Agrawal20,menard2021fast,Bhatt22sharp}. We note that~\cite{MardiaJTNW20} provides upper and lower bounds on the variance of $\EE[\KL(\hat{p} \,\|\,  p)]$ for the empirical estimator.
\end{remark}

To conclude this introduction, we mention a direct consequence of our result: plugging our main result,~\cref{thm:KL:informal}, into the proof of~\cite[Theorem~1.4]{BhattacharyyaGPV21} yields an improved sample complexity (to achieve $\EE[\KL(p \,\|\, \hat{p})] \le \eps$)
\[
    n \asymp \frac{d k^2}{\eps} + \frac{d k^{3/2}}{\eps} \log^{5/2} \frac{d k}{\delta} + \frac{1}{\eps}\log\frac{1}{\eps}\log\frac{d}{\delta}
\]
instead of the previous sample complexity bound of 
\[
    n \asymp \frac{d k^2}{\eps}\log \frac{d k}{\delta} \left(\log \frac{d k}{\eps}  + \log\log\frac{1}{\delta} \right)
\]
for the problem of estimating to KL divergence $\eps$ a tree-structured Bayesian network (with known structure) with $d$ nodes over alphabet of size $k$, and failure probability $\delta$. While the expressions are quite unwieldy, this yields a significant improvement for large alphabets ($k\gg 1$).

\section{Analysis sketch}
A standard way to provide concentration bounds for discrete distribution estimators is to use McDiarmid's inequality, which applies to functions of independent random variables satisfying a ``bounded difference'' condition.
\begin{lemma}[McDiarmid's inequality]
Let $X^n \triangleq X_1, X_2,\ldots, X_n$ be $n$ independent random variables and $f\colon X^n\to \RR$. Suppose changing  $X_i$ changes the absolute value of the function by at most $c_i$; then, with probability at least $1-\delta$,
\[
f(X^n) \leq \EE[f(X^n)] + \sqrt{\frac{\sum^n_{i=1} c^2_i}{2} \log \frac{1}{\delta}}.
\]
\end{lemma}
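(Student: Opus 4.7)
The plan is to follow the classical Doob martingale approach: reduce the statement to the Azuma--Hoeffding inequality by expressing $f(X^n) - \EE[f(X^n)]$ as a telescoping sum of martingale differences, each of which is bounded in the sense required by Hoeffding's lemma.

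First, I would introduce the Doob martingale $Z_i \eqdef \EE[f(X^n) \mid X_1,\ldots,X_i]$ for $0 \le i \le n$, so that $Z_0 = \EE[f(X^n)]$ and $Z_n = f(X^n)$, and define the martingale differences $D_i \eqdef Z_i - Z_{i-1}$. Then $f(X^n) - \EE[f(X^n)] = \sum_{i=1}^n D_i$. The crucial step is to show that each $D_i$, conditioned on $X_1,\ldots,X_{i-1}$, lies in an interval of length at most $c_i$. To do this, I would define
\[
U_i \eqdef \sup_{x}\EE[f(X^n) \mid X_1,\ldots,X_{i-1},X_i=x] - Z_{i-1}, \qquad L_i \eqdef \inf_{x}\EE[f(X^n) \mid X_1,\ldots,X_{i-1},X_i=x] - Z_{i-1},
\]
and note that $L_i \le D_i \le U_i$ almost surely. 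Using independence of the $X_j$ together with the bounded-differences hypothesis, one checks that $U_i - L_i \le c_i$, since for any two values $x,x'$ the conditional expectations $\EE[f(X^n)\mid X_1,\ldots,X_{i-1},X_i=x]$ and $\EE[f(X^n)\mid X_1,\ldots,X_{i-1},X_i=x']$ differ by at most $c_i$ (by averaging the pointwise $c_i$ bound over the independent remaining coordinates).

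Next, I apply Hoeffding's lemma to the conditional distribution of $D_i$: since $D_i$ lies in an interval of length at most $c_i$ and has conditional mean zero, for every $\lambda \in \RR$,
\[
\EE\!\left[e^{\lambda D_i} \,\middle|\, X_1,\ldots,X_{i-1}\right] \le \exp\!\left(\frac{\lambda^2 c_i^2}{8}\right).
\]
Iterating this bound through the tower property gives $\EE[e^{\lambda (f(X^n)-\EE f)}] \le \exp\!\bigl(\tfrac{\lambda^2}{8}\sum_{i=1}^n c_i^2\bigr)$. A Chernoff-style argument with $\Pr(f(X^n) - \EE[f(X^n)] \ge t) \le e^{-\lambda t}\EE[e^{\lambda(f-\EE f)}]$ and the optimal choice $\lambda = 4t/\sum_i c_i^2$ yields $\Pr(f(X^n) - \EE[f(X^n)] \ge t) \le \exp\!\bigl(-2t^2/\sum_i c_i^2\bigr)$. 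Setting this equal to $\delta$ and solving for $t$ gives the advertised bound $t = \sqrt{(\sum_i c_i^2/2)\log(1/\delta)}$.

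The only mildly delicate step is the bounded-differences verification $U_i - L_i \le c_i$, which relies crucially on independence to turn the \emph{pointwise} bounded-differences assumption on $f$ into a bound on conditional expectations; everything else is routine application of Hoeffding's lemma and the Chernoff method.
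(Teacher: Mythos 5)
Your proof is correct: the Doob martingale decomposition, the verification via independence that each conditional difference $D_i$ lies in an interval of length at most $c_i$, Hoeffding's lemma, and the Chernoff optimization with $\lambda = 4t/\sum_i c_i^2$ all go through, and solving $\exp\bigl(-2t^2/\sum_i c_i^2\bigr)=\delta$ indeed gives the stated deviation $\sqrt{\tfrac{1}{2}\sum_i c_i^2\,\log\tfrac1\delta}$. The paper states McDiarmid's inequality as a standard tool without proof, so there is nothing to compare against; your argument is exactly the classical one the paper implicitly relies on.
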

Letting $c \eqdef \max_{i} c_i$, the above bound simplifies to
\begin{equation}
    \label{eq:mcdiarmid:maincase}
f(X^n) \leq \EE[f(X^n)] + \sqrt{\frac{n c^2}{2} \log \frac{1}{\delta}}.
\end{equation}
We use this bound in most of the paper and refer to $c$ as $c_\infty(f)$. In view of the above, the goal is to obtain a good enough bound on $c_\infty(f)$ to show that $n c_\infty(f)^2$ decreases to $0$, so that the additive error term can be made as small as desired by choosing $\ns$ large enough (as a function of $\delta$). For instance, \cite{Canonne:NoteLearningDistributions} observes that the $\ell_1$ distance between the true distribution and the empirical distribution satisfies $c_\infty\leq 2/n$ and uses it to obtain a concentration bound.

Unfortunately, a direct application of McDiarmid's inequality for $\KL(p \,\|\,  \hat{p}^1)$ results in a vacuous bound, as $c_\infty$ is a constant (and so $n c_\infty(f)^2 \nearrow \infty$). To illustrate this, let $k=2$, and consider the distribution $p = (1/2, 1/2)$, $N_1 = n$ and $N_2 = 0$. Then  $\KL(p \,\|\,  \hat{p}^1) = \log(n+2) + \log \frac{1}{2} - \frac{1}{2} \log(n+1)$. Changing one sample can change counts to $N'_1 = n -1 $ and $N'_2 = 1$, in which case the KL divergence becomes
$  \log(n+2) + \frac{1}{2} \log\frac{1}{2} - \log n  - \frac{1}{2} \log(2)$. Hence,
\[
c_\infty(\KL) \geq \frac{1}{2} \log 2 + \frac{1}{2} \log \frac{n}{n+1} \operatorname*{\to}_{n \to \infty} \frac{1}{2} \log 2.
\]
To provide a stronger bound, we first note that the KL divergence can be written as a function of $k$ \emph{counts} $N_1, N_2, \ldots, N_k$, instead of the $\ns$ samples $X_1,\dots, X_\ns$. If we can potentially bound how much the KL divergence changes if we change each $N_i$, we may be able to apply McDiarmid's inequality to this different parameterization and potentially get a different bound.

However, this approach suffers from multiple problems. Firstly, the counts $N_i$s are not independent of each other as $\sum^k_{i=1} N_i = n$. To overcome this, we relate the standard multinomial sampling process to the Poisson sampling process \cite{mitzenmacher2017probability}, where instead of obtaining $n$ samples, we obtain $N$ samples from the distribution $p$, where $N$ is an independent Poisson random variable with parameter $n$.  It is well known that under this sampling, the counts $N_i$ are independent of each other \cite{mitzenmacher2017probability}.

The second question that arises is how to define the estimator with $N\sim\poisson(n)$ samples. A natural way to do the same is to use the estimator $\hat{p} = \frac{N_i + 1}{N+k}$. However, with this definition, it is difficult to bound the resulting $c_\infty$ for KL divergence as changing any of the counts changes the denominator term $N$. Hence, we relate the analysis to another pseudo-estimator $\frac{N_i + 1}{n+k}$. Note that this estimate is not a probability distribution as the sum of estimated probabilities is not guaranteed to sum to one.

Finally, even with these changes $c_\infty(\KL)$ can be large as illustrated in the above example. In other words, in the worst case  $c_\infty(\KL)$ \emph{can} be a constant. However, in the above example, even though $p_1 = 1/2$ we took $N_1 \approx n$, which is highly unlikely to happen. We would like to somehow use the fact that ``typically'' $N_i \approx n p_i$ when computing the $c_\infty(\KL)$. To this end, we define the following quantity
for two measures $p$ and $q$ (which need not sum to one):  
\begin{align}
\widetilde{\KL}(p \,\|\,  q) \conf{&}= \KL(p \,\|\,  q) + \frac{n+k}{n}\sum_{i=1}^k q_i 
+  \sum_{i=1}^k p_i \log\frac{n}{e(n+k)}.
\label{eq:tildeKL}
\end{align}
Note that when $p,q$ are \textit{bona fide} probability measures, then $\widetilde{\KL}(p \,\|\,  q)$ is just $\KL(p \,\|\,  q)$ shifted by a (deterministic) quantity depending only on $n$ and $k$.  
With these modifications, we prove our results in the following three steps.

\begin{enumerate}
\item Show that this new quantity $\widetilde{\KL}$ has similar expectations under multinomial and Poisson sampling (Lemma~\ref{lem:KL_expectation}).
\item Using this, show that a concentration bound for $\widetilde{\KL}$ under Poisson sampling yields a concentration bound for $\KL$ under multinomial sampling (Lemma~\ref{lem:mult_to_poisson}).
\item Provide a concentration bound for $\widetilde{\KL}$ under Poisson sampling by proving a high probability bound on
$c_\infty(\widetilde{\KL})$.
\end{enumerate}
In the rest of the section, we state the above results formally. \conf{Due to space constraints, some of the proofs are deferred to the appendix. }\arxiv{The proofs are deferred to the appendix.}
\begin{lemma}
\label{lem:KL_expectation}
Let $\hat{p}'$ be the measure given by $\hat{p}'_i = \frac{N'_i + 1}{n + k}$, where $N'_1, \ldots, N'_k \sim \text{Mult}(N, p)$ and $N \sim \operatorname{Poi}(n)$ then
\[
\EE \left[\widetilde{\KL}(p \,\|\,  \hat{p}')\right]  
\leq \EE \left[\widetilde{\KL}(p \,\|\,  \hat{p}^1) \right] + \frac{311}{n} + \frac{160k}{n^{3/2}} .
\]
\end{lemma}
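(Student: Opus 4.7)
My plan is to reduce the claim to a direct comparison of $\EE[\log(B+1)]$ under $B$ binomial versus Poisson, and then control this using the Poissonization coupling together with a Taylor expansion in the count parameter.

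First, I unpack~\eqref{eq:tildeKL}. Since $\sum_i p_i = 1$, $\sum_i \hat{p}^1_i = 1$ deterministically, and $\EE[\sum_i \hat{p}'_i] = \EE[(N+k)/(n+k)] = 1$ under Poisson sampling (as $\EE[N] = n$), a short computation yields
\[
\EE[\widetilde{\KL}(p \,\|\, \hat{p}^1)] = \sum_i p_i \log p_i - \sum_i p_i \EE[\log(N_i+1)] + \log n + \tfrac{k}{n},
\]
and the analogous identity for $\EE[\widetilde{\KL}(p \,\|\, \hat{p}')]$ with $N'_i$ in place of $N_i$, where $N_i \sim \mathrm{Bin}(n, p_i)$ and $N'_i \sim \poisson(np_i)$. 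Subtracting, the target reduces to showing
\[
\sum_{i=1}^k p_i \bigl(\EE[\log(N_i+1)] - \EE[\log(N'_i+1)]\bigr) \le \frac{311}{n} + \frac{160k}{n^{3/2}}.
\]

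Next I use the Poissonization coupling: conditioning on $N$, we have $N'_i \mid N = m \sim \mathrm{Bin}(m, p_i)$. Setting $h_i(m) := \EE_{B \sim \mathrm{Bin}(m, p_i)}[\log(B+1)]$, the left-hand side becomes $\sum_i p_i \bigl(h_i(n) - \EE_{M \sim \poisson(n)}[h_i(M)]\bigr)$. A Bernoulli-coupling argument shows that the discrete derivative $\Delta h_i(m) := h_i(m+1)-h_i(m) = p_i \EE[\log(1 + 1/(B+1))]$ is non-negative, decreasing in $m$ (so $h_i$ is concave in $m$), and bounded by $\min(p_i \log 2,\, 1/(m+1))$; a second differencing combined with the standard identity $\EE[1/((B+1)(B+2))] \le 1/\max(1, m^2 p_i^2)$ gives $|\Delta^2 h_i(m)| = O(\min(p_i^2,\, 1/m^2))$. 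Taylor-expanding $h_i$ around $n$ and taking the Poisson expectation, the first-order term vanishes since $\EE[M]=n$, and the second-order term is controlled by $\tfrac{n}{2}|h''_i(n)| = O(1/n)$ in the bulk regime $np_i \gtrsim 1$; weighting by $p_i$ and summing then produces the leading $O(1/n)$ contribution.

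To handle tails of $M$ and the small-$np_i$ regime where the Taylor remainder is less well-behaved, I would split $\EE_M[h_i(n) - h_i(M)]$ according to the threshold event $\{|M-n|\le t\sqrt n\}$: on the complement, combine the Poisson tail bound $\Pr(|M-n|>t\sqrt n) \le 2e^{-\Omega(t^2)}$ with the crude Lipschitz bound $|h_i(n)-h_i(m)| \le p_i |n-m|$, while on the bulk I use the Taylor remainder uniformly. Aggregating over $i$ yields the additive $O(k/n^{3/2})$ contribution, and careful bookkeeping of constants produces the explicit $311$ and $160$. The main obstacle is obtaining a Taylor-remainder bound that is both uniform in the Poisson-distributed $M$ (including when $M$ forces the intermediate point $\xi$ in $h''_i(\xi)$ to be small, where $|h''_i|$ can be as large as $p_i^2$ rather than $1/n^2$) and sharp enough after $p_i$-weighted summation over $i$ to produce $O(1/n + k/n^{3/2})$ rather than the $O(1/\sqrt n)$ one would obtain from a naive first-order Lipschitz argument alone.
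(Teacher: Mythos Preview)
Your reduction to $\sum_i p_i\bigl(\EE[\log(N_i+1)]-\EE[\log(N'_i+1)]\bigr)$ is exactly the paper's first step, and your representation via $h_i(m)=\EE_{B\sim\mathrm{Bin}(m,p_i)}[\log(B+1)]$ together with $N'_i\mid N=m\sim\mathrm{Bin}(m,p_i)$ is the same Poissonization coupling. Where you diverge is in the analytical core: the paper does \emph{not} Taylor-expand $h_i$. Instead it applies $\log(1+x)\le x$ once to get $\EE[\log\frac{M_i+1}{M'_i+1}]\le\EE[\frac{M_i-M'_i}{M'_i+1}]$, builds the explicit coupling $M_i=X_i+Y_i\indic{N\le n}$, $M'_i=X_i+Y_i\indic{N>n}$ with $X_i\sim\mathrm{Bin}(\min(N,n),p_i)$, $Y_i\sim\mathrm{Bin}(|N-n|,p_i)$, and then algebraically decomposes the ratio into a ``first-order'' piece $\frac{Y_i(1-2\indic{N>n})}{X_i+1}$ and a ``second-order'' piece $\frac{Y_i^2}{(X_i+1)(X_i+2)}$, bounding each by direct moment computations on $N\sim\poisson(n)$ split at $N\gtrless n/2$.

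Your Taylor route is a genuine alternative and is sound: the concavity of $h_i$, the bound $|\Delta^2 h_i(m)|\le \frac{C}{(m+1)(m+2)}$ (the $p_i^2$ from the two Bernoulli couplings cancels the $1/p_i^2$ in $\EE[1/((B+1)(B+2))]$), and the vanishing of the first-order term after $\EE[M]=n$ all hold as you state. In fact, because your second-difference bound is uniform in $p_i$, a bulk/tail split at $M\ge n/2$ already gives $p_i(h_i(n)-\EE[h_i(M)])\le C p_i/n + p_i^2\cdot n e^{-cn}$, which sums to $O(1/n)$ with \emph{no} $k/n^{3/2}$ term; the paper's $160k/n^{3/2}$ arises precisely from the $\EE[Y_i^2]$ variance term $|n-N|p_i$ that your approach absorbs more cleanly. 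So your proposal would prove the lemma (indeed a slightly stronger inequality), though the specific constants $311$ and $160$ are artifacts of the paper's calculation and you should not expect to reproduce them literally. The ``main obstacle'' you flag is real but milder than you suggest: the crude bound $|\Delta^2 h_i(\xi)|\le p_i^2$ is never needed once you split at a threshold proportional to $n$ rather than $\sqrt n$.
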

We defer the detailed proof of \cref{lem:KL_expectation} to \cref{app:KL_expectation}. At a high level we show that $\EE \left[\widetilde{\KL}(p \,\|\,  \hat{p}')\right]  - \EE \left[\widetilde{\KL}(p \,\|\,  \hat{p}^1) \right]$ can be written as
$\sum^k_{i=1} p_i \EE \left[ \log (N_i + 1) \right] -  \EE\left[ \log (N'_i + 1)  \right] $. By linearity of expectation, for any coupling $(M_i, M_i')$ with the same marginals as $(N_i, N_i')$, we have  $\EE \left[ \log (N_i + 1) \right] -  \EE\left[ \log (N'_i + 1)  \right] = \EE \left[ \log \frac{M_i + 1}{M'_i + 1}  \right] \le \EE\left[\frac{M_i - M'_i}{M'_i + 1} \right].$ The rest of the proof focuses on designing a careful coupling between binomial and Poisson random variables with the same mean to obtain bounds on the above quantity, which can be of independent interest. Our proof also requires the following concentration bound, which we prove in Appendix~\ref{app:poi}.
\begin{lemma}
\label{lem:poi}
If $N \sim \operatorname{Poi}(\lambda)$, then with probability at least $1 - \delta$,
\[
|N + 1 - \lambda| \leq 6\sqrt{ (N+1)} \log(2/\delta).
\]
\end{lemma}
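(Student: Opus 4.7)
The plan is to combine a Bernstein-type Chernoff bound for $N \sim \poisson(\lambda)$, which controls $|N - \lambda|$ in terms of $\sqrt{\lambda}$ and $L \eqdef \log(2/\delta)$, with a simple case analysis that translates this bound into one phrased in terms of the random quantity $\sqrt{N+1}$. The subtlety is that the target inequality is \emph{self-bounded}: the right-hand side depends on $N$, so one cannot na\"ively replace $\sqrt{\lambda}$ by $\sqrt{N+1}$ on the rare event that $N$ is much smaller than $\lambda$.

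First, using the Poisson moment-generating function $\EE[e^{s(N-\lambda)}] = e^{\lambda(e^s - 1 - s)}$ together with the classical estimate $e^s - 1 - s \leq s^2/(2(1 - s/3))$ for $s \in [0,3)$ (and its counterpart for $s < 0$), one obtains the two-sided Bernstein tail bound
\[
\Pr\!\left[|N-\lambda| \geq t\right] \;\leq\; 2\exp\!\left(-\frac{t^2}{2\lambda + 2t/3}\right), \qquad t \geq 0.
\]
Choosing $t = \sqrt{2\lambda L} + 2L/3$ makes the right-hand side at most $2e^{-L} = \delta$. Hence the event $\mathcal{E} \eqdef \{|N - \lambda| \leq \sqrt{2\lambda L} + 2L/3\}$ has probability at least $1 - \delta$, and it suffices to verify the target inequality on $\mathcal{E}$.

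On $\mathcal{E}$, I would split into two cases depending on the relative sizes of $N+1$ and $\lambda$. In the easy case $N + 1 \geq \lambda/2$, the estimate $\sqrt{\lambda} \leq \sqrt{2(N+1)}$ gives $\sqrt{2\lambda L} \leq 2\sqrt{(N+1)L}$, so
\[
|N+1-\lambda| \;\leq\; 1 + \sqrt{2\lambda L} + 2L/3 \;\leq\; 1 + 2\sqrt{(N+1)L} + 2L/3,
\]
which is at most $6\sqrt{N+1}\,L$ after absorbing the lower-order terms using $\sqrt{N+1} \geq 1$ and $L \geq \log 2$ (since $\delta < 1$). In the opposite case $N + 1 < \lambda/2$, the event $\mathcal{E}$ forces $\lambda/2 < \lambda - N \leq \sqrt{2\lambda L} + 2L/3$, a quadratic inequality in $\sqrt{\lambda}$ whose solution bounds $\lambda$ by a constant multiple of $L$. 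Plugging this back into $|N+1-\lambda| = \lambda - N - 1 \leq \sqrt{2\lambda L} + 2L/3 - 1$ then yields $|N+1-\lambda| = O(L)$, and $\sqrt{N+1} \geq 1$ once again absorbs this into the target bound $6\sqrt{N+1}\,L$.

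The main obstacle is the second case: one must be careful to use the $\mathcal{E}$-implied bound on $|N+1-\lambda|$ rather than the loose estimate $|N+1-\lambda| \leq \lambda$ (which would be too weak by a constant factor), and then to verify that the chosen constant $6$ indeed suffices across all regimes of $\lambda$ and all $\delta \in (0,1)$. This final bookkeeping is elementary but requires carefully tracking the constants in both cases.
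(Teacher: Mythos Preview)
Your proposal is correct and follows essentially the same strategy as the paper: start from a Poisson concentration inequality, split into cases according to the relative size of $N+1$ and $\lambda$, and in the harder (small-$N$) case turn the tail bound into a quadratic inequality in $\sqrt{\lambda}$ to control $\lambda$ in terms of $L=\log(2/\delta)$. The only cosmetic differences are that the paper cites a tail bound of the form $|N-\lambda|\le\sqrt{2\max(N,\lambda)L}$ rather than deriving Bernstein, splits at $N+1=\lambda$ instead of $\lambda/2$, and resolves the quadratic by completing the square to get $\sqrt{\lambda}\le\sqrt{N+1}+2\alpha$ rather than bounding $\lambda$ outright by $O(L)$.
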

\section{Analysis}
We first relate the concentration bound for $\KL(p \,\|\,  \hat{p}^1)$ under multinomial sampling to $\widetilde{\KL}(p\,\|\,  \hat{p}')$ under Poisson sampling.
\begin{lemma}%
\label{lem:mult_to_poisson}
Let $\hat{p}'$ be measure given by $\hat{p}'_i = \frac{N'_i + 1}{n+k}$, where $N'_1, \ldots, N'_k \sim \text{Mult}(N, p_i)$ and $N \sim \operatorname{Poi}(n p_i)$. Then
\begin{align*}
\conf{&}\Pr(\KL(p \,\|\,  \hat{p}^1) 
 \geq \EE[\KL(p \,\|\,  \hat{p}^1)] + t ) 
\conf{\\&} \leq 3\sqrt{n} \Pr(\widetilde{\KL}(p \,\|\,  \hat{p}') \geq \EE[\widetilde{\KL}(p \,\|\,  \hat{p}')] + t - \gamma ),
 \end{align*}
 where $\gamma = \frac{311}{n} + \frc{160k}{n^{3/2}}$.
\end{lemma}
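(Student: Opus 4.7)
The plan is to exploit the classical Poissonization identity: if $N \sim \poisson(n)$ and $(N'_1,\ldots,N'_k)\sim\text{Mult}(N,p)$, then conditionally on the event $\{N=n\}$ the vector $(N'_1,\ldots,N'_k)$ is exactly $\text{Mult}(n,p)$. On this event $\hat p'_i=(N'_i+1)/(n+k)$ coincides with the Laplace estimator $\hat p^1$, and moreover $\sum_i \hat p'_i=1$, so the extra terms in the definition~\eqref{eq:tildeKL} collapse to a deterministic constant
\[
c_0 \eqdef \frac{k}{n} + \log\!\Paren{\frac{n}{n+k}},
\]
giving the pointwise identity $\widetilde{\KL}(p\,\|\,\hat p') = \KL(p\,\|\,\hat p^1) + c_0$ on $\{N=n\}$. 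The same observation applied to $\hat p^1$ yields $\EE[\widetilde{\KL}(p\,\|\,\hat p^1)] = \EE[\KL(p\,\|\,\hat p^1)] + c_0$.

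Using this, I would write
\[
\Pr\!\Paren{\KL(p\,\|\,\hat p^1) \ge \EE[\KL(p\,\|\,\hat p^1)] + t}
= \Pr\!\Paren{\widetilde{\KL}(p\,\|\,\hat p') \ge \EE[\KL(p\,\|\,\hat p^1)] + t + c_0 \mid N=n},
\]
and bound the conditional probability by the unconditional one divided by $\Pr(N=n)$. A Stirling estimate shows $\Pr(\poisson(n)=n) = e^{-n}n^n/n! \ge 1/(e\sqrt{n}) \ge 1/(3\sqrt{n})$ for all $n\ge 1$, which supplies exactly the $3\sqrt{n}$ prefactor.

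Finally I would align the threshold on the right-hand side with $\EE[\widetilde{\KL}(p\,\|\,\hat p')]$. Applying \cref{lem:KL_expectation} and the identity above,
\[
\EE[\widetilde{\KL}(p\,\|\,\hat p')] \le \EE[\widetilde{\KL}(p\,\|\,\hat p^1)] + \gamma = \EE[\KL(p\,\|\,\hat p^1)] + c_0 + \gamma,
\]
so $\EE[\KL(p\,\|\,\hat p^1)] + t + c_0 \ge \EE[\widetilde{\KL}(p\,\|\,\hat p')] + t - \gamma$, and the probability can only get larger when we relax the threshold. Chaining the three inequalities yields the stated bound.

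There is no real obstacle here: the structural content is entirely in the Poissonization identity and the fact that the $\widetilde{\KL}$ shift was engineered precisely so that $\hat p'$ and $\hat p^1$ become directly comparable whenever $\sum_i \hat p'_i=1$. The only part requiring a small calculation is the Stirling lower bound $\Pr(\poisson(n)=n)\ge 1/(3\sqrt n)$, and the bookkeeping that transfers the expectation correction $\gamma$ from \cref{lem:KL_expectation} onto the tail threshold.
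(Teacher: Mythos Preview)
Your proposal is correct and follows essentially the same approach as the paper's proof: both use the identity $\widetilde{\KL}(p\,\|\,\hat p^1)=\KL(p\,\|\,\hat p^1)+c_0$ (since $\sum_i p_i=\sum_i \hat p^1_i=1$), invoke \cref{lem:KL_expectation} to shift the threshold by $\gamma$, and then use the Poissonization identity together with $\Pr(N=n)\ge 1/(3\sqrt n)$ to pass from the multinomial tail to the Poisson one. The only cosmetic difference is the order in which you apply the threshold shift and the conditioning on $\{N=n\}$.
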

\begin{proof}
Since both $\sum_i p_i = 1$ and $\sum_i \hat{p}^1_i$= 1, we have
\begin{align*}
    \KL(p \,\|\,  \hat{p}^1 ) = \widetilde{\KL}(p \,\|\,  \hat{p}^1)   + \frac{n+k}{n} 
    - \log  \frac{n+k}{n}  - 1.
\end{align*}
Hence,
\[
  \KL(p \,\|\,  \hat{p}^1)  -    \EE[\KL(p \,\|\,  \hat{p}^1)]  = \widetilde{\KL}(p \,\|\,  \hat{p}^1) 
 - \EE[\widetilde{\KL}(p \,\|\,  \hat{p}^1)].
\]
Then we have
\begin{align*}
\conf{&}\Pr(\KL(p \,\|\,  \hat{p}^1) 
 \geq \EE[\KL(p \,\|\,  \hat{p}^1)] + t ) 
\conf{\\&}=
\Pr(\widetilde{\KL}(p \,\|\,  \hat{p}^1) 
 \geq \EE[\widetilde{\KL}(p \,\|\,  \hat{p}^1)] + t ).
 \end{align*}
Let $\hat{p}'$ be measure given by $\hat{p}'_i = \frac{N'_i + 1}{n+k}$, where $N'_1, \ldots, N'_k \sim \text{Mult}(N, p_i)$ and $N \sim \operatorname{Poi}(n p_i)$, then by Lemma~\ref{lem:KL_expectation},
\begin{align*}
\conf{&}\Pr(\widetilde{\KL}(p \,\|\,  \hat{p}^1) 
 \geq \EE[\widetilde{\KL}(p \,\|\,  \hat{p}^1)] + t ) 
\conf{\\&}  \leq \Pr(\widetilde{\KL}(p \,\|\,  \hat{p}^1) \geq \EE[\widetilde{\KL}(p \,\|\,  \hat{p}')] + t - \gamma ),
 \end{align*}
 where $\gamma = \frac{311}{n} + \frc{160k}{n^{3/2}}$.
 Furthermore observe that with probability at least $1/3\sqrt{n}$, we have $N=n$. Hence,
 \begin{align*}
\conf{&} \Pr(\widetilde{\KL}(p \,\|\,  \hat{p}^1) \geq \EE[\widetilde{\KL}(p \,\|\,  \hat{p}')] + t - \gamma )
    \conf{\\&} \leq 3\sqrt{n} \Pr(\widetilde{\KL}(p \,\|\,  \hat{p}') \geq \EE[\widetilde{\KL}(p \,\|\,  \hat{p}')] + t - \gamma ).
\end{align*}
Combining the above equations, yields the result.
\end{proof}
We now have all the tools needed for the result.
\begin{theorem}
\label{thm:KL}
With probability at least $1-\delta$, $\KL(p \,\|\,  \hat{p}^1 ) $ is upper bounded by
\[
 \EE[\KL(p \,\|\,  \hat{p}^1 ) ] + \frac{6\sqrt{k \log^5 (4k/\delta)}}{n} + \frac{311}{n} + \frac{160k}{n^{3/2}}.
\]
\end{theorem}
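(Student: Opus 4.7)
The plan is to combine the Poisson reduction of Lemma~\ref{lem:mult_to_poisson} with a McDiarmid argument applied to $\widetilde{\KL}(p \,\|\, \hat{p}')$ under independent Poisson counts. By Lemma~\ref{lem:mult_to_poisson}, it suffices to prove that with probability at least $1 - \delta/(3\sqrt{n})$, $\widetilde{\KL}(p \,\|\, \hat{p}')$ lies within $O(\sqrt{k\log^5(k/\delta)}/n)$ of its expectation; the additive terms $311/n$ and $160k/n^{3/2}$ in the theorem then absorb the shift $\gamma$ incurred by that reduction.

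Under Poisson sampling the counts $N_1',\dots,N_k'$ are mutually independent, and using $\sum_i p_i = 1$ one may rewrite
\begin{equation*}
\widetilde{\KL}(p \,\|\, \hat{p}') \;=\; C(n,k,p) + \sum_{i=1}^k f_i(N_i'), \qquad f_i(m) \eqdef -p_i\log(m+1) + \frac{m+1}{n},
\end{equation*}
where $C(n,k,p)$ is deterministic. This exposes $\widetilde{\KL}(p \,\|\, \hat{p}')$ as a sum of functions of independent random variables, to which McDiarmid's inequality is well suited. A direct computation gives $|f_i(m+1)-f_i(m)| \le p_i/(m+1) + 1/n$; on its own this is too weak, since at $m=0$ it is as large as $p_i\log 2 + 1/n$, leading to a vacuous bound $\sum_i c_i^2 = \Theta(1)$ in the worst case.

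The crux of the argument is therefore to show that on a good event of probability at least $1 - \delta/O(\sqrt{n})$, every $c_i$ is at most $O(\log^2(k/\delta)/n)$. Applying Lemma~\ref{lem:poi} with parameter $\delta' \asymp \delta/(k\sqrt{n})$ and taking a union bound over the $k$ coordinates, with high probability each $N_i'$ satisfies $|N_i'+1-np_i| \le 6\sqrt{N_i'+1}\log(2/\delta')$. Solving this quadratic inequality in $\sqrt{N_i'+1}$ yields a dichotomy: either $N_i'+1 \ge np_i/2$, giving $p_i/(N_i'+1) \le 2/n$, or else $np_i = O(\log^2(2/\delta'))$, which forces $p_i = O(\log^2(k/\delta)/n)$ outright. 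In both regimes $c_i = O(\log^2(k/\delta)/n)$, and consequently $\sum_i c_i^2 = O(k\log^4(k/\delta)/n^2)$.

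The main obstacle is that McDiarmid's inequality requires a uniform bounded-difference condition, whereas my control on $c_i$ only holds with high probability. I would reconcile this by a standard truncation argument: replace each $N_i'$ by $\min(N_i',M_i)$ for a threshold $M_i$ chosen just above the Poisson concentration window from Lemma~\ref{lem:poi}, so that the truncated and original versions agree on the good event while the truncated sum has deterministic bounded differences $\tilde c_i = O(\log^2(k/\delta)/n)$ and an expectation within $o(1/n)$ of the original. Plugging $\sum_i \tilde c_i^2 = O(k\log^4(k/\delta)/n^2)$ into McDiarmid then yields deviation $O(\sqrt{k\log^5(k/\delta)}/n)$, and after translating through Lemma~\ref{lem:mult_to_poisson} and adding the shift $\gamma$ we obtain exactly the bound stated in the theorem.
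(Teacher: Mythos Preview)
Your high-level plan coincides with the paper's: reduce to Poisson sampling via Lemma~\ref{lem:mult_to_poisson}, write $\widetilde{\KL}(p\,\|\,\hat{p}')$ as a sum of independent pieces $f_i(N_i')$, and apply McDiarmid to a truncated surrogate that equals the original on a high-probability event. The difficulty you correctly identify is that McDiarmid needs a \emph{deterministic} bounded-difference condition, and your control on $c_i$ is only probabilistic. Where your argument breaks is the proposed fix.

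Replacing $N_i'$ by $\min(N_i',M_i)$ is a one-sided truncation from above, and this does \emph{not} yield the claimed deterministic bound $\tilde c_i=O(\log^2(k/\delta)/n)$. The McDiarmid constant for coordinate $i$ is the full oscillation $\sup_{m,m'\in[0,M_i]}|f_i(m)-f_i(m')|$, not the single-step increment $|f_i(m+1)-f_i(m)|$ you computed. After your truncation the value $m=0$ is still allowed, and
\[
f_i(0)-\min_{m}f_i(m)\;=\;\frac{1}{n}-\bigl(-p_i\log(np_i)+p_i\bigr)\;=\;\frac{1}{n}+p_i\bigl(\log(np_i)-1\bigr),
\]
which for any coordinate with $p_i=\Theta(1)$ is $\Theta(\log n)$, not $O(\log^2(k/\delta)/n)$. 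The dangerous direction is $N_i'$ being too \emph{small} relative to $np_i$; capping from above does nothing about that. (A two-sided clamp $N_i'\mapsto\mathrm{med}(L_i,N_i',M_i)$ would repair this, but that is not what you wrote, and it requires a separate argument comparing the expectations.)

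The paper sidesteps the issue by truncating the \emph{value} of each summand rather than the count. Writing the $i$-th summand as
\[
h_i(N_i')\;=\;p_i\log\frac{np_i}{N_i'+1}+\frac{N_i'+1}{n}-p_i,
\]
two elementary inequalities do all the work: $h_i(m)\geq 0$ always (from $\log x\leq x-1$), and
\[
h_i(m)\;\leq\;\frac{(np_i-(m+1))^2}{n(m+1)},
\]
which by Lemma~\ref{lem:poi} and a union bound is at most $\alpha\eqdef 36\log^2(4k/\delta)/n$ for every $i$ with probability $\ge 1-\delta/2$. Hence on this good event $\widetilde{\KL}(p\,\|\,\hat{p}')$ coincides with $\sum_i \max(\min(h_i(N_i'),\alpha),0)$, a function whose $i$-th summand is \emph{deterministically} confined to $[0,\alpha]$, so its McDiarmid constant is trivially $\alpha$. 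Moreover $\min(h_i,\alpha)\le h_i$ pointwise, so the surrogate's expectation is automatically $\le \EE[\widetilde{\KL}(p\,\|\,\hat{p}')]$, and no separate expectation-comparison step is needed. Applying McDiarmid to the clipped sum then gives the $\sqrt{k}\,\alpha\,\sqrt{\log(1/\delta)}$ deviation you were after.
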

\begin{proof}
By Lemma~\ref{lem:mult_to_poisson}, if $\hat{p}'$ denotes the measure given by $\hat{p}'_i = \frac{N'_i + 1}{n+k}$, where $N'_1, \ldots, N'_k \sim \text{Mult}(N, p_i)$ and $N \sim \operatorname{Poi}(n p_i)$, we have
\begin{align}
\conf{&}\Pr(\KL(p \,\|\,  \hat{p}^1) 
 \geq \EE[\KL(p \,\|\,  \hat{p}^1)] + t ) 
\conf{\\&} \leq 3\sqrt{n} \Pr(\widetilde{\KL}(p \,\|\,  \hat{p}') \geq \EE[\widetilde{\KL}(p \,\|\,  \hat{p}')] + t - \gamma ), \label{eq:the:above:equation}
 \end{align}
 where $\gamma = \frac{311}{n} + \frc{160k}{n^{3/2}}$.
Note that we have both $\sum_i p_i = 1$ and $\sum_i \hat{p}^1_i$= 1.  We now bound the right-hand side in the above equation~\eqref{eq:the:above:equation}. We can write $\widetilde{\KL}(p \,\|\,  \hat{p}')$ as
\begin{align*}
\widetilde{\KL}(p \,\|\,  \hat{p}') = 
\sum_{i=1}^\ab \left(
p_i \log \frac{np_i}{N'_i+1} + \frac{N'_i+1}{n}  - p_i \right) .
\end{align*}
We first observe that each term here is bounded with high probability. By Jensen's inequality,
\begin{align}
\conf{&} p_i \log \frac{np_i}{N'_i+1} + \frac{N'_i+1}{n}  - p_i 
\conf{\nonumber \\} & = - p_i \log \frac{N'_i+1}{np_i} + \frac{N'_i+1}{n}  - p_i  \nonumber \\
& \geq - p_i \left( \frac{N'_i+1}{np_i} - 1 \right) + \frac{N'_i+1}{n}  - p_i 
\arxiv{\nonumber \\&} \geq 0.  \label{eq:temp1}
\end{align}
In the other direction,
\begin{align}
\conf{&}p_i \log \frac{np_i}{N'_i+1}  + \frac{N'_i+1}{n}  - p_i 
\conf{\nonumber\\}& \leq 
p_i \left(\frac{np_i}{N'_i+1} - 1 \right)
+\frac{N'_i+1}{n}  - p_i \nonumber  \\
& =  \left( \frac{\sqrt{n}p_i}{\sqrt{N'_i+1}} - \sqrt{\frac{N'_i+1}{n}}\right)^2  = \frac{\left(np_i - (N'_i+1) \right)^2}{n(N'_i+1)}. \nonumber
\end{align}
Hence by Lemma~\ref{lem:poi} and a union bound, for all $i$, with probability at least $1-\delta/2$,
\begin{equation}
\label{eq:temp2}
 p_i \log \frac{np_i}{N'_i+1} + \frac{N'_i+1}{n}  - p_i  
\leq \frac{36 \log^2(4k/\delta)}{n}.
\end{equation}
Let $\alpha \eqdef \frac{36 \log^2(4k/\delta)}{n}$. By~\eqref{eq:temp1} and~\eqref{eq:temp2}, 
with probability at least $1-\delta/2$, 
\begin{align*}
\widetilde{\KL}(p \,\|\,  \hat{p}')
= \sum^k_{i=1} \min\!\left( p_i \log \frac{np_i}{N_i+1} + \frac{N_i+1}{n}  - p_i, \alpha \right).
\end{align*}
Let $G = \sum^k_{i=1} \min\!\left( p_i \log \frac{np_i}{N_i+1} + \frac{N_i+1}{n}  - p_i, \alpha \right)$.
Changing one of the counts, changes $G$ by at most by $\alpha$. Hence, by McDiarmid's inequality, with probability at least $1-\delta/2$,
\vspace{-0.5ex}
\[
G \leq \EE[G] + \sqrt{\frac{n\alpha^2}{2}\log \frac{2}{\delta}}.
\]
\vspace{-0.5ex}
The lemma follows by observing that $ \EE[G] \leq \EE[[\widetilde{\KL}(p \,\|\,  \hat{p}')]$
and the union bound.
\end{proof}

\section {Lower bound}
    \label{sec:variance:lb}
In this section, we provide lower bounds on the variance of the KL divergence of the Laplace estimator thus showing that the bounds obtained in~\cref{thm:KL} bounds cannot be significantly improved (\cref{cor:sqrtk:lb}). In particular, we will prove the following variance lower bound for the Laplace estimator when the underlying distribution is uniform.

\begin{theorem}
    \label{th:variance:lb}
    Let $p$ be the uniform distribution over $[\ab]$ and $\hat{p}^1$ be the Laplace estimator applied on $\ns \geq 10\ab$ independent samples from $p$. Then,
    \[
        \var\!\Paren{\KL(p\,\|\,\hat{p}^1 )} \ge \frac{\ab}{32 \ns^2}. 
    \]
\end{theorem}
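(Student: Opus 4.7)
The plan is to Taylor-expand $\log(N_i+1)$ around $\mu\eqdef n/k$, exploit the multinomial identity $\sum_i(N_i-\mu)=0$ to kill the first-order term, and reduce the problem to computing the variance of an explicit quadratic statistic in the counts. Since $p$ is uniform, $\KL(p\,\|\,\hat{p}^1) = \log\frac{n+k}{k} - \frac{1}{k}\sum_{i=1}^k\log(N_i+1)$, so $\var(\KL(p\,\|\,\hat{p}^1)) = \frac{1}{k^2}\var\Paren{\sum_i \log(N_i+1)}$. Setting $\eta_i\eqdef (N_i-\mu)/(\mu+1)$ and $r(x)\eqdef \log(1+x)-x+x^2/2$, the expansion $\log(N_i+1) = \log(\mu+1) + \eta_i - \eta_i^2/2 + r(\eta_i)$ combined with $\sum_i \eta_i = 0$ yields
\[\KL(p\,\|\,\hat{p}^1) = A - B,\qquad A\eqdef \frac{T}{2k(\mu+1)^2},\quad B \eqdef \frac{1}{k}\sum_{i=1}^k r(\eta_i),\quad T\eqdef \sum_{i=1}^k(N_i-\mu)^2.\]

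First I would compute $\var(A)$ in closed form. Since $\sum_i N_i = n$ is fixed, $T$ equals $\sum_i N_i^2$ up to an additive constant, so $\var(T)=\var(\sum_i N_i^2)$; using the multinomial factorial-moment identities $\EE[N_i^{(a)}N_j^{(b)}]=n^{(a+b)}/k^{a+b}$ for $i\neq j$ (where $n^{(r)}\eqdef n(n-1)\cdots(n-r+1)$), a direct bookkeeping computation gives the clean identity $\var(T) = 2n(n-1)(k-1)/k^2$. Substituting $(\mu+1)=(n+k)/k$ then yields $\var(A) = n(n-1)(k-1)/(2(n+k)^4)$; under $n\geq 10k$ and $k\geq 2$, the elementary estimates $n-1\geq 9n/10$, $k-1\geq k/2$, and $(n+k)^4\leq (11n/10)^4$ give $\var(A)\geq 9k/(58.56\,n^2)$, comfortably larger than $4\cdot \tfrac{k}{32n^2}$.

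The delicate step is controlling $\var(B)$. The dominant piece of $B$ is the third-order term $T_3/(3k(\mu+1)^3)$ where $T_3\eqdef \sum_i(N_i-\mu)^3$, so I would compute $\var(T_3)$ under the multinomial (again by factorial-moments, or by comparison with the Poissonization in which $\var_{\text{Poi}}(T_3) = k\,\var((N_1-\mu)^3)\lesssim k\mu^3 = n^3/k^2$, with the multinomial correction being an $O(k/n)$ perturbation for such symmetric statistics), yielding $\var(B) \lesssim k^2/n^3$. On the high-probability event $G = \{|N_i-\mu|\le \mu/2 \text{ for all }i\}$ the refined bound $|r(\eta_i) - \eta_i^3/3|\le |\eta_i|^4$ makes the residual fourth-and-higher-order contributions controllable by the eighth-moment estimate $\EE[(N_i-\mu)^8]\lesssim \mu^4$, contributing only $O(k^3/n^4)$; off $G$, the coarse bound $|r(\eta_i)|\le \tfrac12\eta_i^2 + |\log(1+\eta_i)|$ combined with Chernoff tails $\Pr(G^c)\lesssim k e^{-c\mu}$ (tiny because $\mu = n/k\geq 10$) and the observation that even the worst case $N_i=0$ gives $|r(\eta_i)| = O(\log\mu)$ while occurring with probability at most $e^{-\mu}$ show the residual is negligible.

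Putting everything together via $\var(\KL(p\,\|\,\hat{p}^1)) = \var(A-B)\geq (\sqrt{\var(A)}-\sqrt{\var(B)})^2$, and plugging in the explicit bounds $\var(A)\geq 9k/(58.56\,n^2)$ and $\var(B)\leq 5k^2/(3n^3)$ (which is at most $k/(600\,n^2)$ when $n\geq 10k$), one checks numerically that $(\sqrt{\var(A)}-\sqrt{\var(B)})^2\geq k/(32n^2)$. The main obstacle is that at the boundary $n=10k$ the gap between $\sqrt{\var(A)}$ and $\sqrt{\var(B)}$ is only moderate, so the constants must be tracked carefully throughout the third-order variance calculation and the tail analysis—there is no large slack to hide loose estimates.
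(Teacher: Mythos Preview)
Your approach is genuinely different from the paper's, and it is the ``right'' heuristic picture (it is essentially the expansion used in the paper's Appendix~\ref{app:figure} to explain the constant $\sqrt{k/2}/n$). However, as written it does not close, for a concrete reason.

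\medskip
\noindent\textbf{The arithmetic error.} You assert that $\var(B)\le 5k^2/(3n^3)$ ``is at most $k/(600\,n^2)$ when $n\ge 10k$.'' This is false by a factor of $100$: from $n\ge 10k$ one only gets
\[
\frac{5k^2}{3n^3}=\frac{5k}{3n}\cdot\frac{k}{n^2}\le \frac{5}{30}\cdot\frac{k}{n^2}=\frac{k}{6n^2}.
\]
With this corrected bound, $\sqrt{\var(B)}\le \sqrt{k/6}/n\approx 0.408\sqrt{k}/n$, which \emph{exceeds} your lower bound $\sqrt{\var(A)}\ge \sqrt{9/58.56}\,\sqrt{k}/n\approx 0.392\sqrt{k}/n$. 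The inequality $\var(A-B)\ge(\sqrt{\var A}-\sqrt{\var B})^2$ then gives nothing. You flagged the tightness yourself in the last paragraph; the arithmetic slip is precisely what was hiding the fact that the constants do not close as stated.

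\medskip
\noindent\textbf{Is it salvageable?} Possibly, but not with the bounds you wrote down. Your lower bound on $\var(A)$ loses a factor of roughly two via $k-1\ge k/2$; your upper bound on $\var(T_3)$ uses the Poisson leading term $15k\mu^3$ and replaces $(\mu+1)^6$ by $\mu^6$, losing a factor $\approx(11/10)^6\approx 1.77$ at $n=10k$. Tightening both sides \emph{might} suffice for large $k$, but you would still need (i) the honest multinomial computation of $\var(T_3)$ (not the Poisson surrogate), (ii) a rigorous bound on the quartic-and-higher residual and the tail off $G$ at $\mu=10$, and (iii) either a direct bound on $\cov(A,B)$ or a sharper variance splitting than the crude $(\sqrt{\var A}-\sqrt{\var B})^2$. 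None of these are in the proposal, and at $\mu=10$ there is essentially no slack.

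\medskip
\noindent\textbf{How the paper avoids this.} The paper does not Taylor-expand at all. It pairs the coordinates, conditions on the sums $N_{2i-1}+N_{2i}$, and uses the law of total variance: conditioned on the pair sums, the pair contributions $\log(N_{2i-1}+1)+\log(N_{2i}+1)$ are independent, so the conditional variance is an honest sum. Each summand is $\log\bigl(N_1(n_0-N_1)+n_0+1\bigr)$ with $N_1\sim\Bin(n_0,1/2)$, and a one-line lemma~--- $\var(f(X))\ge \min_x f'(x)^2\cdot\var(X)$ for monotone $f$~--- reduces everything to the exact value $\var\bigl(N_1(n_0-N_1)\bigr)=n_0(n_0-1)/8$. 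No remainder term, no competing variance to subtract, and the constant $1/32$ falls out directly. This sidesteps exactly the difficulty that sinks your argument at $n=10k$.
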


\begin{proof}
 When $p$ is the uniform distribution over $[\ab]$,
    \begin{align*}
        \KL(p \;\|\; \hat{p}^1) \conf{&} = \sum_{i \in [\ab]} \frac{1}{\ab} \log\Paren{\frac{\ns + \ab}{(N_i + 1)\ab}} \conf{\\&}=  - \frac{1}{\ab} \sum_{i \in [\ab]}  \log\Paren{N_i + 1} + C_{\ns,\ab},
    \end{align*}
    where $C_{\ns,\ab} \eqdef \log\left(1+\frac{n}{k}\right)$ is a constant independent of all $N_i$'s. Hence
    \begin{equation} \label{eqn:var_laplace}
        \var\Paren{\KL(p || \hat{p}^1 )} = \frac{1}{k^2} \var \Paren{\sum_{i \in [\ab]} \log\Paren{N_i + 1}}.
    \end{equation}
    Without loss of generality, assume $k = 2k'$ for an integer $k'$. Consider the event where we fix the sum of the following pairs $\{N_{2i-1} + N_{2i} \}_{i \in [k']}$. Let $f_i =\log\Paren{N_{2i-1} + 1}  + \log\Paren{N_{2i} + 1} $.    
    By law of total variance, we have
    \begin{align}
     \conf{&}   \var \Paren{\sum_{i \in [\ab]} \log\Paren{N_i + 1}}\conf{\nonumber \\} 
     & \ge \expectation{\var \Paren{\sum_{i = 1}^{k'} \Paren{ f_i } \mid \{N_{2i-1} + N_{2i} \}_{i \in [k']} } } \nonumber \\
        & = \sum_{i = 1}^{k'} \expectation{\var \Paren{f_i \mid N_{2i-1} + N_{2i}}} 
 \label{eqn:ind}\\
        & = \frac{\ab}{2}  \expectation{\var \Paren{\log\Paren{N_{1} + 1}  + \log\Paren{N_{2} + 1}\mid N_{1} + N_{2} } }.  \label{eqn:linear}
    \end{align}
    Here \eqref{eqn:ind} follows from the fact that conditioned on $N_{2i-1} + N_{2i}$, the $\log\Paren{N_{2i-1} + 1}   + \log\Paren{N_{2i} + 1}$ terms are independent. \eqref{eqn:linear} follows from that all $N_{2i-1} + N_{2i}$'s have the same distribution and the linearity of sum of expectations.

    Next we bound  $\expectation{\var \Paren{\log(N_{1}\! + \! 1) \!\! + \!\log(N_{2}\! + \!1)\mid N_{1} \! + \! N_{2}}  }$ for a fixed $N_{1} + N_{2} 
 = \nzero$. Since $p$ is a uniform distribution, when $N_{1} + N_{2} 
 = \nzero$, $N_1$ follows a Binomial distribution with $n$ trials and success probability $1/2$ and $N_2 = \nzero - N_1.$ Hence, we have
 \[
    \log\Paren{N_{1} + 1}  + \log\Paren{N_{2} + 1} =  \log \Paren{N_1(\nzero - N_1) + \nzero + 1}.
 \]
The next lemma will be helpful for bounding its variance.
\begin{lemma} \label{lem:var_f}
    If $f\colon [a, b] \to \RR$ is monotone and differentiable on $[a, b]$, for any random variable  $X$ supported on $[a, b]$, we have
    \[
        \var\Paren{f(X)} \ge \min_{x \in [a,b]} f'(x)^2 \cdot \var\Paren{X}.
    \]
\end{lemma}
\begin{proof}
    By continuity of $f$, there exists $x_0 \in [a,b]$ such that $f(x_0) = \expectation{f(X)}$. Hence we have
    \begin{align*}
        \var\Paren{f(X)}  &= \expectation{\Paren{f(X) - f(x_0)}^2}  \\& \ge \expectation{\min_{x \in [a,b]} f'(x)^2  \cdot \Paren{X - x_0}^2} \\&\ge \min_{x \in [a,b]} f'(x)^2 \cdot \var\Paren{X}. \qedhere
    \end{align*}
\end{proof}

Note that $N_1(\nzero - N_1) + \nzero + 1 \in [\nzero + 1, \frac{\nzero^2}{4} + \nzero + 1]$ and $\log'(x) = \frac{1}{x}$. By \cref{lem:var_f} and the distribution of $N_1$,
\begin{align*}
 \conf{&}   \var\Paren{\log \Paren{N_1(\nzero - N_1) + \nzero + 1}}  \conf{\\&}\ge \Paren{\frac{1}{\frac{\nzero^2}{4} + \nzero + 1}}^2 \cdot \var \Paren{N_1(\nzero - N_1)} \conf{\\&}= \frac{2\nzero(\nzero - 1)}{ (\nzero+2)^4},
\end{align*}
where we use $\var \Paren{N_1(\nzero - N_1)} = (\nzero^2 - \nzero)/8$. Hence 
\begin{align*}
   \conf{&}  \expectation{\var \Paren{\log\Paren{N_{1} + 1}  + \log\Paren{N_{2} + 1}\mid N_{1} + N_{2}}  } 
    \conf{\\&}  \ge \expectation{\frac{2(N_1 + N_2)(N_1 + N_2 - 1)}{(N_1 + N_2 + 2)^4}}
     \conf{\\&} \ge \expectation{\frac{\mathbf{1}\{N_1 + N_2 \ge 2\}}{4(N_1 + N_2 + 2)^2}}
\end{align*}
Since $N_1 + N_2 \sim \text{Binom}(\ns, 2/\ab)$, we have when $\ns \ge 10 \ab$,
\begin{align*}
  \conf{&}  \expectation{\var \Paren{\log\Paren{N_{1} + 1}  + \log\Paren{N_{2} + 1}\mid N_{1} + N_{2}}  } %
  \ge \frac{\ab^2}{16\ns^2}.
\end{align*}
Combining the above with \cref{eqn:var_laplace} and \cref{eqn:linear}, we complete the proof of the theorem.
\end{proof}

As a direct consequence of this lower bound, we obtain that the bound from~\cref{thm:KL} cannot be significantly improved: that is, that the $\sqrt{\ab}$ dependence is tight.
\begin{corollary}
    \label{cor:sqrtk:lb}
    There exist no constants $\eta>0$ and $C>0$ such that, for all $\delta\in(0,1]$, given $\ns\geq 10\ab$ samples, the Laplace estimator satisfies
\[
\KL(p \,\|\,  \hat{p}^1 ) \leq \EE[\KL(p \,\|\,  \hat{p}^1 ) ] + C\cdot \frac{k^{1/2-\eta}}{n}\log \frac{1}{\delta}
\]
with probability at least $1-\delta$.
\end{corollary}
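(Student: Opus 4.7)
The plan is to argue by contradiction from the variance lower bound in~\cref{th:variance:lb}. Fix $p$ to be the uniform distribution on $[k]$, take $n = 10k$, and set $Y \eqdef \KL(p \,\|\, \hat{p}^1)$, $\mu \eqdef \EE[Y]$. Suppose for contradiction that the stated concentration bound holds for all $\delta \in (0,1]$; this is equivalent to the subexponential upper-tail estimate
\[
\Pr(Y > \mu + t) \le \exp\!\Paren{-\frac{tn}{C k^{1/2-\eta}}}\quad\text{for all } t\ge 0.
\]
\cref{th:variance:lb} gives $\var(Y) \ge k/(32 n^2)$, and since $\hat{p}^1_i \ge 1/(n+k)$ we also have the deterministic bound $Y \le \log(n+k)$ almost surely.

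The core technical step is to upper-bound the fourth central moment by $\EE[(Y-\mu)^4] = O(k^2/n^4)$, that is, by a constant multiple of $\var(Y)^2$. I would write $Y - \mu = -\frac{1}{k}\sum_i(\log(N_i+1) - \EE[\log(N_i+1)])$ and Taylor-expand each $\log(N_i+1)$ around $n/k+1$. Under the multinomial constraint $\sum_i N_i = n$, the linear-order terms $\sum_i (N_i - n/k)/(n/k+1)$ cancel identically, so $Y-\mu$ is dominated by the quadratic form $\tfrac{1}{2k(n/k+1)^2}\sum_i(D_i^2 - \EE[D_i^2])$ in the centered counts $D_i \eqdef N_i - n/k$. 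A direct fourth-moment calculation on this sum, using $\var(D_i^2) = O((n/k)^2)$ for a Binomial and handling the residual dependence of the $D_i$'s by Poissonization in the style of~\cref{lem:mult_to_poisson}, then yields the claim. This is the step I expect to be the main obstacle, since the identical cancellation of the linear term is exactly what drives the $\sqrt{k}/n$ scaling and its higher-order consequences for the fourth moment need to be tracked carefully.

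Given this estimate, Paley--Zygmund applied to the nonnegative variable $(Y-\mu)^2$ with parameter $\theta = 1/2$ yields
\[
\Pr\!\Paren{|Y-\mu| \ge \sqrt{\tfrac{1}{2}\var(Y)}}\ \ge\ \frac{1}{4}\cdot\frac{\var(Y)^2}{\EE[(Y-\mu)^4]}\ =\ \Omega(1),
\]
so for some absolute constant $c_0 > 0$, $\Pr(|Y-\mu| \ge c_0\sqrt{k}/n) \ge c_0$. I then split on the sign of the deviation. In the positive case $\Pr(Y-\mu \ge c_0\sqrt{k}/n) \ge c_0/2$, the hypothesized concentration bound applied at $\delta = c_0/2$ forces $c_0\sqrt{k}/n \le (C/n)\, k^{1/2-\eta}\log(2/c_0)$, i.e.\ $k^{\eta} \le (C/c_0)\log(2/c_0)$, which fails for $k$ sufficiently large. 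In the negative case $\Pr(Y-\mu \le -c_0\sqrt{k}/n) \ge c_0/2$, let $A$ denote this event; combining $\EE[Y] = \mu$ with $\EE[Y;A] \le (\mu - c_0\sqrt{k}/n)\Pr(A)$ yields $\EE[Y \mid A^c] \ge \mu + \Omega(\sqrt{k}/n)$, and a reverse-Markov step on $A^c$ using $Y \le \log(n+k)$ gives $\Pr(Y \ge \mu + \Omega(\sqrt{k}/n)) \ge \Omega(\sqrt{k}/(n\log n))$. For $n=10k$, this lower bound is polynomial in $1/k$, whereas the concentration hypothesis upper-bounds the same probability by $\exp(-\Omega(k^{\eta}))$, which is super-polynomial in $1/k$ for any $\eta > 0$. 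The contradiction in both cases finishes the proof.
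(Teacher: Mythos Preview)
Your approach is considerably more involved than the paper's. The paper's argument is essentially a direct tail integration: assume the concentration bound, write
\[
\var\!\Paren{\KL(p\,\|\,\hat p^1)} = \int_0^\infty \Pr\!\Brack{\Paren{\KL(p\,\|\,\hat p^1)-\EE[\KL(p\,\|\,\hat p^1)]}^2>t}\,dt \le \int_0^\infty e^{-\frac{n}{Ck^{1/2-\eta}}\sqrt t}\,dt = \frac{2C^2 k^{1-2\eta}}{n^2},
\]
and contradict \cref{th:variance:lb} for $k$ sufficiently large. No fourth moment, no Paley--Zygmund, no case split on the sign of the deviation.

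Your more elaborate route appears motivated by a genuine subtlety: the hypothesis is one-sided (it only controls $\Pr(Y>\mu+t)$), whereas the variance sees both tails, and the paper's display tacitly treats the hypothesis as two-sided. That is a fair observation. But your remedy trades a small wrinkle for a substantial gap: the fourth-moment estimate $\EE[(Y-\mu)^4]=O(k^2/n^4)$ is the load-bearing step of your entire argument, and you only sketch it. Making it rigorous requires controlling the Taylor remainder of $\log(N_i+1)$ uniformly over the range of $N_i$ and then bounding the fourth moment of a sum of \emph{dependent} centered $D_i^2$ terms---work comparable in effort to the whole of \cref{th:variance:lb}. Invoking ``Poissonization in the style of \cref{lem:mult_to_poisson}'' does not hand you this: that lemma compares \emph{expectations} of $\log(N_i+1)$ under a specific coupling and says nothing about fourth moments. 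Separately, in your negative-deviation case you need a quantitative lower bound on $\Pr(A^c)$ before the reverse-Markov step can be applied; once you extract it (again from $\EE Y=\mu$ and $Y\le\log(n+k)$) and combine the two losses, you get $\Pr(Y\ge\mu+\Omega(\sqrt k/n))\ge\Omega(k/(n\log n)^2)$ rather than the $\Omega(\sqrt k/(n\log n))$ you stated. The conclusion still survives for $n=10k$, but the bound as written is off.
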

\begin{proof}
    Assume by contradiction that such constants exist. We can rewrite $\var\Paren{\KL(p \| \hat{p}^1 )}$ as
    \begin{align*}
  &
  \mathbb{E}[\Paren{\KL(p \| \hat{p}^1 )- \mathbb{E}[\KL(p \| \hat{p}^1 )] }^2] \\
        &= \int_0^\infty \Pr\!\left[ \Paren{\KL(p \| \hat{p}^1 )- \mathbb{E}[\KL(p \| \hat{p}^1 )] }^2 > t \right] dt\\
        &\leq \int_0^\infty  e^{- \frac{\ns}{C\cdot \ab^{1/2-\eta}}\cdot \sqrt{t}} dt \tag{By assumption, for $\delta = e^{- \frac{\ns}{C\cdot \ab^{1/2-\eta}}}$}\\
        &= 2C^2\cdot \frac{k^{1-2\eta}}{n^2}
    \end{align*}
    which, for $\ab$ sufficiently large (as a function of $C$) contradicts the lower bound from~\cref{th:variance:lb}.
\end{proof}

\arxiv{
\begin{remark}
We note that the argument underlying the proof of~\cref{th:variance:lb} can be extended to obtain a lower bound in the regime $\ns \ll \ab$ as well, observing that in the case $N_1 + N_2 \sim \text{Binom}(\ns, 2/\ab)$ becomes approximately $\mathrm{Poisson}(2\ns/\ab)$, from which $\expectation{\frac{2(N_1 + N_2)(N_1 + N_2 - 1)}{(N_1 + N_2 + 2)^4}} = \Omega\!\left(\frac{\ns^2}{\ab^2}\right)$.
\end{remark}
}

\section{Conclusion}

We provided concentration bounds for the KL divergence between the underlying distribution and the Laplace estimator. Our results show that the dependence on the error probability can be bounded as $\tilde{O}(\sqrt{k} \log^{5/2}(1/\delta)/ n)$, which improves on the previous bound of $\tilde{O}(k \log(1/\delta)/ n)$ recently obtained by~\cite{BhattacharyyaGPV21}. We further established a lower bound of $\Omega(\sqrt{k}/n)$ on the variance and the tail bound of the KL loss of the Laplace estimator, thus showing our results are nearly-optimal. Extending our results to general add-$t$ estimators would be an interesting future research direction.

\arxiv{\section{Acknowledgments}

The authors thank Gautam Kamath and Ankit Pensia for helpful comments and discussions, and Yanjun Han and Yihong Wu for helpful pointers to the literature.
}

\bibliographystyle{plainnat}
\bibliography{references}
\appendix
\conf{\subsection{Constants in Figure~\ref{fig:samplevsestimate}}}
\arxiv{\section{Constants in Figure~\ref{fig:samplevsestimate}}}
\label{app:figure}

We heuristically compute the leading constant by doing asymptotic expansion of the KL divergence. Formalizing this heuristic might be an interesting future research direction.
By~\cite{kamath2015learning}, as $n \to \infty$ for the uniform distribution,
\begin{align*}
\KL(p \,\|\,  \hat{p}^1 ) & \approx \frac{1}{2} \sum^k_{i=1}\frac{(\hat{p}^1_i - p_i)^2}{\hat{p}^1_i}  \\
& \stackrel{\rm(a)}{\approx} \frac{1}{2} \sum^k_{i=1}  \frac{(\hat{p}^1_i - p_i)^2}{p_i}  \\
& =\frac{k}{2} \sum^k_{i=1} (\hat{p}^1_i - 1/k)^2 \\
&  \stackrel{\rm(b)}{\approx} \frac{k}{2} \sum^k_{i=1} (\hat{p}^\emp_i -  1/k)^2,
\end{align*}
where (a) relies on the fact that as $n\to \infty$, $\hat{p}^1_i \to p_i$ and (b) uses the fact that both Laplace and empirical estimator are similar as $n\to \infty$. Under normal approximation, $\hat{p}^\emp_i -  1/k$ is a normal distribution with mean $0$ and variance $1/(kn)$. Hence $\sum^k_{i=1} (\hat{p}^\emp_i -  1/k)^2$ can be approximated by a sum of squares of $k$ Gaussian random variables, which is a chi-squared distribution. Hence,
\begin{align*}
\Var(\KL(p \,\|\,  \hat{p}^1 )) 
& \approx \frac{k^2}{4} \cdot \frac{1}{(kn)^2} \Var(\chi^2_k) \\
& = \frac{k^2}{4} \cdot \frac{1}{(kn)^2} 2k \\
& = \frac{k}{2n^2},
\end{align*}
where $\chi^2_k$ is a standard Chi-squared random variable with $k$ degrees of freedom.

\conf{\subsection{Proof of Lemma~\ref{lem:KL_expectation}}}
\arxiv{\section{Proof of Lemma~\ref{lem:KL_expectation}}}
\label{app:KL_expectation}
By~\eqref{eq:tildeKL},
\begin{align*}
\conf{&}   \EE \left[\widetilde{\KL}(p \,\|\,  \hat{p}')\right]  
- \EE \left[\widetilde{\KL}(p \,\|\,  \hat{p}^1) \right] 
\conf{\\&} =   \EE \left[\KL(p \,\|\,  \hat{p}')\right]  
- \EE \left[\KL(p \,\|\,  \hat{p}^1) \right]  + \frac{n+k}{n}\EE \left[ \sum_i (\hat{p}'_i -  \hat{p}_i^1) \right].
\end{align*}
We first bound the last term:
\begin{align*}
   \EE \left[ \sum_i (\hat{p}'_i -  \hat{p}_i^1) \right] 
=  \EE \left[ \sum_i (\frac{N'_i+1}{n+k}-  \frac{N_i + 1}{n+k} )\right] = 0.
\end{align*}
Hence,
\begin{align*}
\conf{&}   \EE \left[\widetilde{\KL}(p \,\|\,  \hat{p}')\right]  
- \EE \left[\widetilde{\KL}(p \,\|\,  \hat{p}^1) \right] 
\conf{\\&}=   \EE \left[\KL(p \,\|\,  \hat{p}')\right]  
- \EE \left[\KL(p \,\|\,  \hat{p}^1) \right] 
\end{align*}
By definition,
\begin{align*}
\conf{&}\KL(p \,\|\,  \hat{p}') - \KL(p \,\|\,  \hat{p}^1) 
\conf{\\}& = 
\sum^k_{i=1} p_i \left( \log ({N_i + 1})
-  \log ({N'_i + 1}) \right).
\end{align*}
Let $M_i$ and $M'_i$ be any random variables which have the same marginals as $N_i$ and $N'_i$ respectively, but need not be independent. Then, for a given $i$, this can be further written as
\begin{align*}
\conf{&}  \EE\left[   \log ({N_i + 1})
-  \log ({N'_i + 1}) \right]
\conf{\\}& =  \EE\left[   \log ({M_i + 1})
-  \log ({M'_i + 1}) \right] \\
& = \EE\left[   \log \frac{M_i + 1}{M'_i + 1}  \right] \\
& \leq \EE \left[\frac{M_i - M'_i}{M'_i + 1} \right],
\end{align*}
where the last inequality follows from the fact that $\log (1+x) \leq x$. Hence,
\begin{align*}
\KL(p \,\|\,  \hat{p'}) - \KL(p \,\|\,  \hat{p}^1) 
& = \sum^k_{i=1} p_i \EE \left[ \log \frac{M_i + 1}{M'_i + 1}  \right] \conf{\\&}\leq \sum_i p_i \EE\left[\frac{M_i - M'_i}{M'_i + 1} \right].
\end{align*}
This holds for any joint distribution (coupling) of $(M_i,M'_i)$ with the right marginals, and so we can choose any such coupling that lets us derive a tight enough bound. We now define such a convenient coupling. Let $N \sim \text{Poi}(n)$. Let $X_i \sim \Bin(\min(N, n), p_i)$ and $Y_i\sim \Bin(|n-N|, p_i)$ be independent random variables. If $N > n$, let $
M_i = X_i$ and $M'_i = X_i + Y_i$. If $N \leq n$, let $M'_i = X_i$ and $M_i = X_i + Y_i$. That is,
\begin{align*}
    M_i &= X_i + Y_i \indic{N \leq n}\\
    M'_i &= X_i + Y_i \indic{N > n}
\end{align*}
With these definitions, one can check that, indeed, the marginals are correct:
\begin{itemize}
    \item $M'_i$ is a Poisson random variable with mean $n p_i$.
    \item $M_i$ is a Binomial random variable with parameters $n$ and $p$.
\end{itemize}
We now upper bound the above equation further:
\begin{align*}
\conf{&}    \frac{M_i - M'_i}{M'_i + 1}
\conf{\\}    & = \frac{M_i - M'_i}{X_i + 1} + \frac{M_i - M'_i}{M'_i + 1} - \frac{M_i - M'_i}{X_i + 1} \\
      & = \frac{M_i - M'_i}{X_i + 1} + \frac{(M_i - M'_i)(X_i -M'_i) }{(M'_i + 1)(X_i + 1)}\\
      &= \frac{Y_i (1-2\indic{N > n})}{X_i + 1} + \frac{-Y_i^2 (1-2\indic{N > n})\indic{N > n}}{(M'_i + 1)(X_i + 1)}\\
      &= \frac{Y_i (1-2\indic{N > n})}{X_i + 1} + \frac{Y_i^2\indic{N > n}}{(M'_i + 1)(X_i + 1)}\\
      &\leq \frac{Y_i (1-2\indic{N > n})}{X_i + 1} + \frac{Y_i^2\indic{N > n}}{(X_i + 1)^2} \tag{as $M'_i \geq X_i$} \\
      &\leq \frac{Y_i (1-2\indic{N > n})}{X_i + 1} + \frac{2Y_i^2}{(X_i + 1)(X_i+2)}.
\end{align*}
To proceed, we will rely on the following standard fact:
\begin{fact}
If $X\sim\Bin(m,p)$, then $\EE \left[\frac{1}{X+1}\right] = \frac{1-(1-p)^{m+1}}{p(m+1)}$ and
$\EE \left[\frac{1}{(X+1)(X+2)}\right] \leq \frac{1}{p^2(m+1)(m+2)}$.
\end{fact}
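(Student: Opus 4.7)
The plan is to prove both identities by direct manipulation of the binomial pmf, leveraging the classical combinatorial identities $\frac{1}{k+1}\binom{m}{k} = \frac{1}{m+1}\binom{m+1}{k+1}$ and $\frac{1}{(k+1)(k+2)}\binom{m}{k} = \frac{1}{(m+1)(m+2)}\binom{m+2}{k+2}$. These let me absorb the $1/(X+1)$ or $1/((X+1)(X+2))$ factor into a shifted binomial coefficient, after which re-indexing will recognize the resulting sum as a (possibly truncated) binomial expansion.

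For the equality, I would start from $\EE[1/(X+1)] = \sum_{k=0}^{m}\binom{m}{k}p^k(1-p)^{m-k}/(k+1)$, apply the first identity to pull out a factor of $1/(m+1)$, and then rewrite $p^k = p^{k+1}/p$ so that the summand matches $\binom{m+1}{k+1}p^{k+1}(1-p)^{m-k}$. Substituting $j = k+1$ converts the sum into $\sum_{j=1}^{m+1}\binom{m+1}{j}p^{j}(1-p)^{m+1-j}$, which by the binomial theorem equals $1 - (1-p)^{m+1}$. Dividing by $p(m+1)$ yields the claimed formula. For the inequality, I would perform the same maneuver with the product $(k+1)(k+2)$, yielding $\frac{1}{p^{2}(m+1)(m+2)}\sum_{j=2}^{m+2}\binom{m+2}{j}p^{j}(1-p)^{m+2-j}$ after re-indexing $j = k+2$. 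Since this is a partial binomial expansion with non-negative summands, I can upper bound it by the complete sum $\sum_{j=0}^{m+2}\binom{m+2}{j}p^{j}(1-p)^{m+2-j} = 1$, which gives the stated bound.

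There is no substantive obstacle: both claims reduce to an elementary combinatorial identity together with a re-indexing, and the inequality in the second part comes for free by dropping the two non-negative terms (with $j \in \{0,1\}$) from a complete binomial expansion. The only minor care needed is the $p=0$ boundary case, where the right-hand sides are interpreted as their limiting values, but this does not arise in the downstream application where $p = p_i \in (0,1)$.
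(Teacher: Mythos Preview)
Your argument is correct and is the standard textbook derivation of these identities. The paper does not actually give a proof of this Fact; it is invoked as a ``standard fact'' and left unproven, so there is nothing to compare against beyond noting that your direct computation via the absorption identities $\frac{1}{k+1}\binom{m}{k}=\frac{1}{m+1}\binom{m+1}{k+1}$ and $\frac{1}{(k+1)(k+2)}\binom{m}{k}=\frac{1}{(m+1)(m+2)}\binom{m+2}{k+2}$ is exactly the intended justification.
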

Using this and independence of $X_i$ and $Y_i$ conditioned on $N$, we get, focusing on the first term:
\begin{align}
\conf{&}    \EE \left[\frac{Y_i (1-2\indic{N > n})}{X_i + 1} \mid N \right] 
 \conf{\nonumber \\}   &= \frac{|N-n|p_i (1-2\indic{N > n})}{p_i(\min(N,n)+1)}\left(1- (1-p_i)^{\min(N,n)+1} \right) \notag\\
    &= \frac{n-N}{\min(N,n)+1}\left(1- (1-p_i)^{\min(N,n)+1} \right) \notag\\
    &\leq \frac{n-N}{\min(N,n)+1}\left(1- (1-p_i)^{n+1} \right)
\end{align}
where the last inequality follows from checking the two cases $N\geq n$ and $N< n$ separately.
Therefore, 
\begin{align}
 \conf{&}   \EE \left[\frac{Y_i (1-2\indic{N > n})}{X_i + 1} \right]
 \conf{\nonumber \\&}   = \EE \left[\EE \left[\frac{Y_i (1-2\indic{N > n})}{X_i + 1} \mid N \right]\right]
  \conf{\\}    &\leq \EE \left[\frac{n-N}{\min(N,n)+1}\right]  \left(1- (1-p_i)^{n+1} \right) \label{eq:first:term}
\end{align}
Similarly,
\begin{align*}
 \conf{&} \EE \left[ \frac{Y^2_i }{(X_i + 1)(X_i + 2)} \mid N \right] 
 \conf{\\} & \leq \frac{|n-N|^2 p^2_i + |n-N| p_i}{(\min(n, N)+1)(\min(n, N)+2)p^2_i}
 \conf{\\&} \leq \frac{|n-N|^2 p_i + |n-N| }{(\min(n, N)+1)^2p_i}.
\end{align*}
Combining the above set of equations yields,
\begin{align*}
 \conf{&}\EE \left[ \frac{M_i - M'_i}{M'_i + 1} \right]
\leq \EE \left[ \frac{(n - N)}{(\min(n, N) + 1)} \conf{\right]\\&} +  \conf{\EE \left[ }\frac{|n-N|^2}{(\min(n, N)+1)^2} +   \frac{|n-N|}{(\min(n, N)+1)^2p_i}\right].
\end{align*}
We now bound each of the three terms. We start with the last term.
\begin{align*}
\conf{&}  \EE\left[  \frac{|n-N|}{(\min(n, N)+1)^2p_i} \right]
  \conf{\\} & = \EE\left[  \indic{N \geq n/2} \frac{|n-N|}{(\min(n, N)+1)^2p_i}  \conf{\right] \\&} +  \conf{\EE\left[ } \indic{N < n/2} \frac{|n-N|}{(\min(n, N)+1)^2p_i} \right] \\
  & \leq \EE\left[ \frac{4|n-N|}{n^2p_i} + \frac{n\indic{N < n/2}}{p_i} \right] \\
  & \leq \frac{4\sqrt{n}}{n^2p_i} + \frac{n}{p_i} \Pr[ N < n/2 ] \\
  & \leq \frac{4}{n^{3/2}p_i} + \frac{e^{-n/8} n}{p_i}  \leq \frac{160}{n^{3/2}p_i}.
\end{align*}
We now bound the second term.
\begin{align*}
 \conf{&}    \EE \left[ \frac{|n-N|^2}{(\min(n, N)+1)^2} \right]
    \conf{\\} & =   \EE \left[  \indic{N \geq n/2}\frac{|n-N|^2}{(\min(n, N)+1)^2} \right] \conf{\\&} +   \EE \left[ \indic{N < n/2} \frac{|n-N|^2}{(\min(n, N)+1)^2} \right] \\
    & \leq \EE \left[ \frac{4(n-N)^2}{n^2} + \indic{N < n/2} (n-N)^2 \right] \\
    & \leq \frac{4}{n} + e^{-n/16} \sqrt{\EE \left[(n-N)^4\right]} \\
    & \leq \frac{4}{n} + 2e^{-n/16}n \leq \frac{282}{n}.
\end{align*}
We finally bound the first term.
\begin{align*}
\conf{&}\EE \left[ \frac{n-N}{\min(n, N) + 1} \right]   
\conf{\\}& = \EE \left[ \frac{n-N}{n + 1} +
\frac{n-N}{\min(n, N) + 1} -\frac{n-N}{n + 1} \right] \\
& = \EE \left[ %
\frac{(n-N)(n - \min(n, N))}{(\min(n, N) + 1)(n+1)} \right] \\
& \leq \EE \left[ 
\frac{(n-N)^2}{(\min(n, N) + 1)(n+1)} \right] \\
& = \EE \left[ \indic{N \geq n/2}\frac{(n-N)^2}{(\min(n, N) + 1)(n+1)} \right]  \conf{\\&} +  \EE \left[ \indic{N < n/2}\frac{(n-N)^2}{(\min(n, N) + 1)(n+1)} \right] \\
& \leq \EE \left[ \frac{2(n-N)^2}{n^2} + \indic{N < n/2} |N-n| \right] \\
& \leq \EE \left[ \frac{2(n-N)^2}{n^2} + \indic{N < n/2} |N-n| \right] \\
& \leq \frac{2}{n} + e^{-n/16} \sqrt{\EE \left[(N-n)^2\right]} \\
& \leq \frac{29}{n}\,,
\end{align*}
and therefore, recalling~\eqref{eq:first:term},
\begin{align}
    \EE \left[\frac{Y_i (1-2\indic{N > n})}{X_i + 1} \right]
    &\leq \left(1- (1-p_i)^{n+1} \right)\cdot \frac{29}{n}
    \leq \frac{29}{n}
\end{align}
Combining the above set of equations, we get 
\begin{align*}
\EE \left[ \frac{M_i - M'_i}{M'_i + 1} \right] \leq \frac{311}{n} + \frac{160}{n^{3/2}p_i}.
    \end{align*}
    Hence,
    \begin{align*}
\KL(p \,\|\,  \hat{p'}) - \KL(p \,\|\,  \hat{p}^1) 
 & = \sum^k_{i=1} p_i \log \frac{N_i + 1}{N'_i + 1}  \\ 
 &\leq \sum_i p_i \frac{N_i - N'_i}{N'_i + 1} \\
 & \leq \sum_i p_i \left(\frac{311}{n} + \frac{160}{n^{3/2}p_i} \right)\\
 & = \frac{311}{n} + \frac{160k}{n^{3/2}}.
\end{align*}

\conf{\subsection{Proof of Lemma~\ref{lem:poi}}}
\arxiv{\section{Proof of Lemma~\ref{lem:poi}}}
\label{app:poi}

By Poisson tail bounds \cite[Fact 12]{acharya2012competitive}, with probability at least $1-\delta/2$,
\[
|N - \lambda| \leq \sqrt{2 \max(N, \lambda) \log(2/\delta)}.
\]
Thus,
\[
|N + 1 - \lambda| \leq \sqrt{3 \max(N+1, \lambda) \log(2/\delta)}.
\]
If $N +1 \geq \lambda$,
\[
|N + 1 - \lambda| \leq \sqrt{3 (N+1) \log(2/\delta)}.
\]
If $N + 1\leq \lambda$,
\[
|N + 1 - \lambda| \leq \sqrt{3 \lambda \log(2/\delta)}.
\]
Let $\alpha = \sqrt{3 \log(2/\delta)/4}$. Then,
\[
(\sqrt{\lambda} - \alpha)^2 \leq \alpha^2 + N+ 1.
\]
Therefore, 
\[
\sqrt{\lambda} \leq 2\alpha + \sqrt{N+1}.
\]
Hence,
\begin{align*}
|N + 1 - \lambda| \conf{&}\leq 3\sqrt{ (N+1) \log(2/\delta)} + 3 \log(2/\delta)
\conf{\\&}\leq 6\sqrt{ (N+1)} \log(2/\delta).
\end{align*}
\end{document}